\documentclass[11pt]{article}

\usepackage{tikz}
\usetikzlibrary{intersections}
\usepackage{multicol}
\usetikzlibrary{positioning}
\usetikzlibrary{decorations.pathreplacing}
 \usetikzlibrary{calc} 
\pgfdeclarelayer{background}
\pgfsetlayers{background,main}

\usepackage[most]{tcolorbox}

\newtcolorbox{redbox}{
    arc=0pt,
    boxrule=0pt,
    colback=red,
    width=.8\textwidth,   
    colupper=black,
    fontupper=\bfseries
}

\usepackage{algpseudocode}

\usepackage[utf8]{inputenc}
\usepackage[colorlinks = true,
            linkcolor = blue,
            urlcolor  = blue,
            citecolor = blue,
            anchorcolor = blue]{hyperref}
\usepackage{amsmath}
\usepackage{amsthm}
\usepackage{amssymb}
\usepackage{xcolor}
\usepackage{algorithm,caption}
\usepackage{cleveref}
\usepackage{graphicx}
\usepackage{scalerel}
\usepackage{enumitem}
\usepackage{natbib}
\usepackage{framed}
\usepackage[most]{tcolorbox}

\definecolor{gray}{gray}{0.4}

\definecolor{ama-iro}{RGB}{0, 158, 243.0}
\definecolor{fuyu-gaki}{RGB}{251, 74, 52}
\definecolor{momiji}{RGB}{245, 70, 111}
\definecolor{hotaru-bi}{RGB}{229,221,58} 
\definecolor{kon-peki}{RGB}{1,120,217}
\definecolor{shin-kai}{RGB}{77,98,152}
\definecolor{shin-ryoku}{RGB}{1,145,97}
\definecolor{yama-budo}{RGB}{171,14,122}

\definecolor{citecolor}{HTML}{5E9100}

\definecolor{theoremcolor}{HTML}{FFE1D9}
\definecolor{resultcolor}{HTML}{FCDFBE}

\definecolor{constraintcolor}{HTML}{BBD49B}
\definecolor{goalcolor}{HTML}{CAE4A7}

\definecolor{remarkcolor}{HTML}{E3EEC7}

\definecolor{definitioncolor}{HTML}{FFFDDA}
\definecolor{examplecolor}{HTML}{F9E9D9}
\definecolor{questioncolor}{HTML}{DDE8EB}
\definecolor{conjecturecolor}{HTML}{DDE8EB}

\definecolor{lightblue}{HTML}{C3E1FF}

\definecolor{captioncolor}{RGB}{128,128,128}

\definecolor{captioncolor}{RGB}{128,128,128}

\theoremstyle{plain}
\newtheorem{theorem}{Theorem}[section]
\newtheorem{lemma}[theorem]{Lemma}

\newtheorem{prop}[theorem]{Proposition}

\newtheorem{corollary}[theorem]{Corollary}
\newtheorem{question}[theorem]{Open Problem}
\newtheorem{question*}{Question}
\newtheorem{conjecture}[theorem]{Conjecture}

\newtheorem{observation}[theorem]{Observation}
\newtheorem{assumption}[theorem]{Assumption}

\crefname{claim}{Claim}{Claims}
\crefname{fact}{Fact}{Facts}

\theoremstyle{definition}

\newtheorem{definition}[theorem]{Definition}
\newtheorem{remark}[theorem]{Remark}
\newtheorem{example}[theorem]{Example}
\newtheorem{goal}[theorem]{Goal}
\newtheorem{goal*}{Goal}

\theoremstyle{plain}

\newenvironment{rlemma}[1]{\renewcommand\thelemma{\ref*{#1}}\begin{lemma}[Restated]}{\end{lemma}}

\newenvironment{rdefinition}[1]{\renewcommand\thedefinition{\ref*{#1}}\begin{definition}[Restated]}{\end{definition}}
\tcbuselibrary{theorems}

\newtcbtheorem[use counter=theorem, number within=section]{boxtheorem}{Theorem}%
{colback=theoremcolor,
 colframe=black,
 coltitle=black,
 colbacktitle=theoremcolor,
 fonttitle=\bfseries,
 separator sign=:,
 enhanced,
 boxed title style={size=small}}{Thm}

\newtcbtheorem[use counter=theorem, number within=section]{boxdef}{Definition}%
{colback=definitioncolor,
 colframe=black,
 coltitle=black,
 colbacktitle=definitioncolor,
 fonttitle=\bfseries,
 separator sign=:,
 enhanced,
 boxed title style={size=small}}{Def}

\tcolorboxenvironment{boxlemma}{colback=resultcolor, colframe=black,
    colbacktitle=resultcolor, coltitle=resultcolor}

\tcolorboxenvironment{rboxlemma}{colback=resultcolor, colframe=black,
	colbacktitle=resultcolor, coltitle=resultcolor}

\tcolorboxenvironment{boxproposition}{colback=resultcolor, colframe=black,
    colbacktitle=resultcolor, coltitle=resultcolor}

\tcolorboxenvironment{boxcorollary}{colback=resultcolor, colframe=black,
    colbacktitle=resultcolor, coltitle=resultcolor}

\tcolorboxenvironment{boxobservation}{colback=resultcolor, colframe=black,
    colbacktitle=resultcolor, coltitle=resultcolor}

\tcolorboxenvironment{boxquestion}{colback=questioncolor, colframe=black,
    colbacktitle=questioncolor, coltitle=shin-kai}
\newenvironment{boxquestion*}{\begin{question*}}{\end{question*}}
\tcolorboxenvironment{boxquestion*}{colback=questioncolor, colframe=black,
    colbacktitle=questioncolor, coltitle=questioncolor}

\tcolorboxenvironment{boxdefinition}{colback=definitioncolor, colframe=black,
    colbacktitle=definitioncolor, coltitle=definitioncolor}

\tcolorboxenvironment{rboxdefinition}{colback=definitioncolor, colframe=black,
	colbacktitle=definitioncolor, coltitle=definitioncolor}

\tcolorboxenvironment{boxassumption}{colback=definitioncolor, colframe=black,
    colbacktitle=definitioncolor, coltitle=definitioncolor}

\tcolorboxenvironment{boxexample}{colback=examplecolor, colframe=black,
    colbacktitle=examplecolor, coltitle=examplecolor}

\newtheorem{exercise}[theorem]{Exercise}

\tcolorboxenvironment{boxexercise}{colback=exercisecolor, colframe=black,
    colbacktitle=exercisecolor, coltitle=exercisecolor}

\tcolorboxenvironment{boxgoal}{colback=goalcolor, colframe=black,
    colbacktitle=goalcolor, coltitle=goalcolor}

\newenvironment{boxgoal*}{\begin{goal*}}{\end{goal*}}
\tcolorboxenvironment{boxgoal*}{colback=goalcolor, colframe=black,
    colbacktitle=goalcolor, coltitle=goalcolor}

\tcolorboxenvironment{boxremark}{colback=remarkcolor, colframe=black,
    colbacktitle=remarkcolor, coltitle=remarkcolor}

\tcolorboxenvironment{boxconjecture}{colback=conjecturecolor, colframe=black,
    colbacktitle=conjecturecolor, coltitle=shin-kai}
\newenvironment{boxconjecture*}{\begin{conjecture*}}{\end{conjecture*}}
\tcolorboxenvironment{boxconjecture*}{colback=conjecturecolor, colframe=black,
    colbacktitle=conjecturecolor, coltitle=conjecturecolor}

\usepackage{geometry}
\newtheorem{innercustomthm}{Theorem}
\newenvironment{customthm}[1]
  {\renewcommand\theinnercustomthm{#1}\innercustomthm}
  {\endinnercustomthm}

\newcommand{\vc}{\mathtt{VC}}
\newcommand{\R}{\mathbb{R}}

\newcommand{\X}{\mathcal{X}}

\renewcommand{\H}{\mathcal{H}}
\newcommand{\F}{\mathcal{F}}
\newcommand{\B}{\mathcal{B}}

\newcommand{\A}{\mathcal{A}}
\newcommand{\eps}{\epsilon}

\newcommand{\sign}{\mathsf{sign}}

\newcommand{\D}{\mathcal{D}}

\newcommand{\N}{\mathbb{N}}

\renewcommand{\Pr}{\mathop{\mathbb{P}}}

\newcommand{\Dx}{\mathcal{D}_{\X}}

\newcommand{\fat}{\mathtt{fat}}

\newcommand{\Lip}{\mathtt{Lip}}

\newcommand{\floor}[1]{\lfloor#1\rfloor}

\newcommand{\norm}[1]{\left\lVert#1\right\rVert}

        \hypersetup{
           breaklinks=true,   
           colorlinks=true,   
        }

\author{
  Yair Ashlagi\thanks{School of Electrical and Computer Engineering, Tel Aviv University.}
  \and 
  Roi Livni\thanks{School of Electrical and Computer Engineering, Tel Aviv University.}
  \and  
  Shay Moran\thanks{Departments of Mathematics, Computer Science, and Data and Decision Sciences, Technion, and Google Research.}  
  \and 
  Tom Waknine\thanks{Departments of Mathematics, Technion.}
}

\begin{document}
\title{Margin in Abstract Spaces}
\maketitle
\begin{abstract}
Margin-based learning, exemplified by linear and kernel methods, is one of the few classical settings where generalization guarantees are independent of the number of parameters. This makes it a central case study in modern highly over-parameterized learning. We ask what minimal mathematical structure underlies this phenomenon.

We begin with a simple margin-based problem in arbitrary metric spaces: concepts are defined by a center point and classify points according to whether their distance lies below $r$ or above $R$. We show that whenever $R>3r$, this class is learnable in \emph{any} metric space. Thus, sufficiently large margins make learnability rely only on the triangle inequality, without any linear or analytic structure being necessary. Our first main result extends this phenomenon to concepts defined by bounded linear combinations of distance functions, and reveals a sharp threshold: there exists a universal constant such that whenever the margin is larger than this constant, the class is learnable in every metric space, while below it there exist metric spaces where it is not learnable at all.

We then ask whether margin-based learnability can always be explained via an embedding into a linear space -- that is, reduced to linear classification in some Banach space through a kernel-type construction. We answer this negatively by demonstrating a margin learnable class that cannot be embedded into any Banach space in which linear classification with margins is learnable.

\end{abstract}

\section{Introduction}
%

Margin-based learning provides a canonical example of parametric learning in high dimensions due to it being one of the primary settings where generalization does not depend on the number of parameters. Certain hypothesis classes, most notably linear functions, whose learning complexity typically grows with dimensionality, become markedly easier to learn when their effective domain is restricted to points lying a fixed distance away from the decision boundary, i.e., when a margin condition is imposed.
Due to the appealing nature of dimensionality-independent generalization, margin assumptions have been a topic for extensive research in the context of halfspaces and kernel methods in Euclidean and Hilbert spaces. Classic maximal margin algorithms, such as the Support Vector Machines, enjoy dimension-independent generalization guarantees when the data satisfies a margin condition. Namely, when some target function realizes the data with a sufficiently large margin \cite{Vapnik1998, Cristianini2000, Bartlett2002}. 

However, much of the existing work relies on strong geometric assumptions, typically those of Euclidean or, more generally, Hilbert spaces (via kernel methods). This naturally raises the question: 
\begin{center}
\textbf{Which basic mathematical properties underlie margin-based learnability, and when can it be reduced to learning embeddings in linear spaces?}
\end{center}

We begin by abstracting linear margin-based hypotheses to the minimal geometric setting of metric spaces, and ask whether such weak structure suffices for learnability. Let $\X$ be an arbitrary metric space with distance function $d(\cdot,\cdot)$. A simple margin-based concept class in this setting is defined by points of the space: for a given center $x \in \X$ and parameters $0 \le r < R$, the corresponding concept labels points within distance at most $r$ from $x$ as positive, and points at distance greater than $R$ from $x$ as negative. Formally, we define
\begin{equation}
d_x(x') =
\begin{cases}
+1  & d(x,x') \le r,\\[4pt]
-1 & d(x,x') > R~.
\end{cases}
\label{eq:dx}
\end{equation}
Points whose distance from $x$ lies in the margin region $(r,R]$ are left unlabeled, and the data distribution is assumed to assign zero probability to this region.

This definition applies to arbitrary metric spaces and relies only on the distance structure. To illustrate how it relates to classical linear classification with margin, consider the special case where~$\X$ is the unit $\ell_2$-sphere in $\mathbb{R}^d$, scaled to have diameter $1$. In this setting, the level sets of the distance function $d(x,\cdot)$ correspond to intersections of affine hyperplanes with the sphere. In particular, a linear classifier with margin $\gamma$ induces a distance-based labeling of the form above, with parameters $r = \tfrac{1}{2}-\gamma$ and $R = \tfrac{1}{2}+\gamma$. Thus, ~\Cref{eq:dx} corresponds to the margin structure underlying standard linear classification, while extending naturally to general metric spaces.


\begin{figure}[ht]
\caption{Distance based labeling on a sphere}
\label{fig:sphere-margin}
\centering
\begin{tikzpicture}[scale=2]

    \def\angle{20} 
    \def\hpwidth{0.4}
    \def\hpslant{2.1}
    
    \begin{scope}[rotate around={\angle:(0,0)}]
        \fill[gray!40,opacity=1] (-\hpslant,-\hpwidth) -- (1.5,-\hpwidth) -- (\hpslant,\hpwidth) -- (-1.5,\hpwidth) -- cycle;
        \path[name path=edge3] (\hpslant,\hpwidth) -- (-1.5,\hpwidth);
        \path[name path=circle] (0,0) circle (1);
        \path [name intersections={of=edge3 and circle,by={i1,i2}}];
        \draw (-\hpslant,-\hpwidth) -- (1.5,-\hpwidth);
        \draw (1.5,-\hpwidth) -- (\hpslant,\hpwidth);
        \draw (\hpslant,\hpwidth) -- (i1);
        \draw (i2) -- (-1.5,\hpwidth);
        \draw (-1.5,\hpwidth) -- (-\hpslant,-\hpwidth);
    \end{scope}

    \shade[ball color=magenta!20,opacity=0.3] (0,0) circle (1);

    \begin{scope}[rotate around={\angle:(0,0)}]
        \clip (-1,-1) rectangle (1,0);
        \fill[gray!40,opacity=1] (-\hpslant,-\hpwidth) -- (1.5,-\hpwidth) -- (\hpslant,\hpwidth) -- (-1.5,\hpwidth) -- cycle;
    \end{scope}

    \begin{scope}[rotate around={\angle:(0,0)}]
        \fill[gray!40,opacity=1] (0,0) ellipse (1 and 0.25); 
        \draw[red!60,dashed] (0,0) ellipse (1 and 0.25); 
    \end{scope}

    \begin{scope}[rotate around={\angle:(0,0)}]
        \draw[red!60] (0,0.11) ellipse (0.98 and 0.25);
        \draw[red!60,opacity=0.5] (0,-0.11) ellipse (0.98 and 0.25);
        \draw[->] (-0.5,-0.21) -- (-0.5,-0.31);
        \draw[->] (-0.5,-0.21) -- (-0.5,-0.11);
        \node[left, rotate=\angle] at (-0.5,-0.21) {$\gamma$};
    \end{scope}

    \begin{scope}[rotate around={\angle:(0,0)}]
        \draw[dashed] (0,0) -- (0,1);
        \fill (0,1) circle(0.02);
        \fill (0,0) circle(0.02);
        \node[above, rotate=\angle] at (0,1) {$x$};
        \node[above, rotate=\angle] at (-0.4,-0.12) {$x_1$};
        \node[below, rotate=\angle] at (-0.4,-0.34) {$x_2$};
        \fill (-0.4,-0.12) circle(0.02);
        \fill (-0.4,-0.34) circle(0.02);
    \end{scope}

    \node[below] at (0, -1)
  {$\begin{aligned}
    d(x,x_1)&=r\\
    d(x,x_2)&=R
  \end{aligned}$};

\end{tikzpicture}
\end{figure}
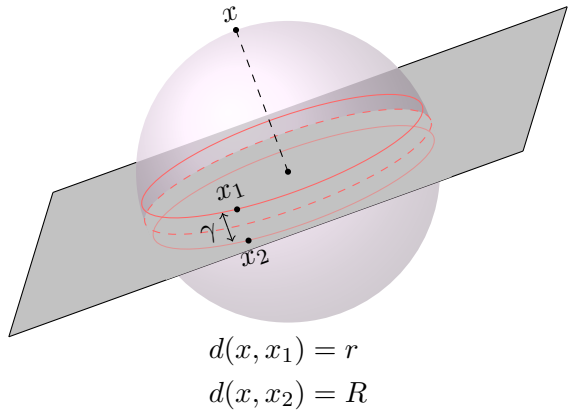

This simple concept class exhibits a sharp threshold behavior: for fixed values $r,R$ such that $R \geq 3r$, the class is learnable regardless of the underlying metric space. In this case, the $\vc$ dimension of the class is $1$ and even a single pair of points cannot be shattered by it\footnote{Note that the VC dimension continues to characterize PAC learnability for such classes of \emph{partial concepts}; this follows from known extensions of PAC learning theory, which we discuss in more detail later}. Indeed, assume by contradiction that there exists a pair $x, x' \in \X$ that are shattered.
Then, $\exists x_1 \in \X$ such that $d(x_1,x) \leq r$ and $d(x_1,x') \leq r$. By the triangle inequality, we have $d(x, x') \leq 2r$. On the other hand $\exists x_2 \in \X$ such that $d(x_2,x) < r$ and $d(x_2,x') > R \geq 3r$. By the triangle inequality, we get $d(x, x') > 2r$, a contradiction.

On the other hand, when $R < 3r$, the simple triangle-inequality-based argument no longer applies, and the behavior changes qualitatively: depending on the underlying metric space, the class may be arbitrarily hard to learn, or even unlearnable.
Indeed, let $X = A \cup B$ such that $A = \{ a_n \}_{n \in \mathbb{N}}$ is a countable set of points and $B = \{{b_s}\}_{\{ s \subseteq A \mid |s| < \infty \}}$ is a set of points referencing finite subsets of A.
    \[
    \rho(a_i, a_j) = \frac{r+R}{2}, \quad
    \rho(b_s, b_w) = \frac{r+R}{2}, \quad
    \rho(a_i, b_s) = 
    \begin{cases}
    r &  a_i \in s \\
    R & a_i \notin s\\
    \end{cases}
    \]

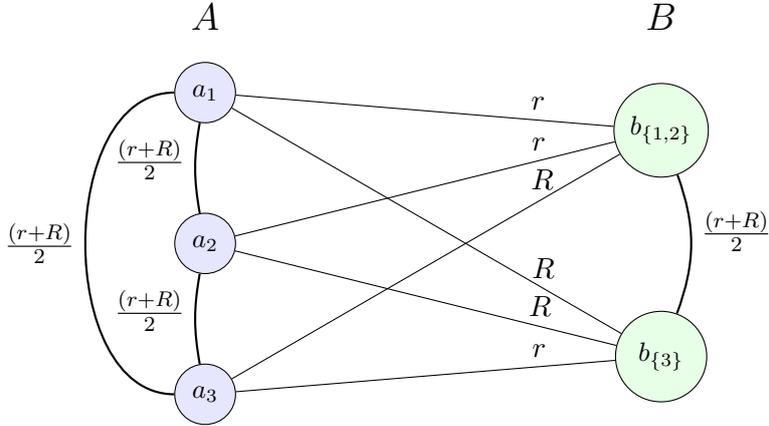
\begin{figure}[ht]
\caption{Unlearnable metric space for $R \leq 3r$}
\label{fig:metric-lin-shat}
\centering
\begin{tikzpicture}[
    node/.style={circle, draw, minimum size=8mm, font=\small, fill=blue!10},
    nodeB/.style={circle, draw, minimum size=8mm, font=\small, fill=green!10},
    edge/.style={-latex, thick},
    ]
\node[node] (a1) at (0,4) {$a_1$};
\node[node] (a2) at (0,2) {$a_2$};
\node[node] (a3) at (0,0) {$a_3$};

\node[nodeB, minimum size=12mm] (b1) at (6,3.5) {$b_{\{1,2\}}$};
\node[nodeB, minimum size=12mm] (b2) at (6,0.5) {$b_{\{3\}}$};

\node[font=\Large\bfseries] at ($(a1)+(0,1)$) {\bfseries $A$};
\node[font=\Large\bfseries] at ($(b1)+(0,1.5)$) {\bfseries $B$};

\draw (b1) to node[pos=0.2, above] {$r$} (a1);
\draw (b1) to node[pos=0.2, above] {$r$} (a2);
\draw (b1) to node[pos=0.2, above] {$R$} (a3);

\draw (b2) to node[pos=0.2, above] {$R$} (a1);
\draw (b2) to node[pos=0.2, above] {$R$} (a2);
\draw (b2) to node[pos=0.2, above] {$r$} (a3);

\draw[bend left=20, thick] (b1) to node[pos=0.4, right] {$\frac{(r+R)}{2}$} (b2);

\draw[bend right=10, thick] (a1) to node[pos=0.4, left] {$\frac{(r+R)}{2}$} (a2);
\draw[bend right=90, thick] (a1) to node[pos=0.5, left] {$\frac{(r+R)}{2}$} (a3);
\draw[bend right=10, thick] (a2) to node[pos=0.4, left] {$\frac{(r+R)}{2}$} (a3);


\end{tikzpicture}
\end{figure}

For any $R \leq 3r$ this is indeed a metric space, while any finite subset of $A$ is shattered by concepts defined by points in $B$. Hence the VC dimension of the class is infinite yielding an unlearnable setting.

Simply put, for sufficiently large margins, the learnability of these abstract margin-based classifiers relies only on the triangle inequality and not on any additional linear or geometric structure. This threshold is sharp: for every margin value below it, there exist metric spaces in which the class is not learnable. Thus, while learnability is guaranteed uniformly above the threshold, below it the behavior becomes space-dependent and may range from learnability to complete non-learnability.

In \Cref{sec:metric} we extend these ideas to a much richer concept class and identify total boundedness as a key structural
property in this setting.  When the margin is below the critical threshold, learnability remains possible provided
the underlying metric space is totally bounded\footnote{Recall that a space is totally bounded if for any $\gamma>0$ it can be covered by finitely many balls of radius $\gamma$.}.
Beyond the specific distance-based classes considered here, we
show that total boundedness in fact precisely characterizes the learnability of the entire class of Lipschitz functions over the metric space.


\paragraph{Margin-Based Learning in Linear Spaces.} 
Since our analysis of margin-based learning in general metric spaces provides only a partial characterization of margin based learnability, we shift our focus to the more structured setting of linear spaces. A well studied case of margin learning is the one of linear functionals on Hilbert spaces. Although infinite dimensional Hilbert spaces are not totally bounded, linear classification with margin in these spaces provide a canonical example for margin-based learning, a fact that cannot be accounted for by the metric space perspective.
This motivates our study of linear spaces, where additional geometric structure plays a central role.

At the same time, linear spaces arise naturally even when dealing with abstract, non-linear learning problems. A central paradigm in learning theory is to embed a classification problem into a linear space, most prominently via kernel methods, so that hypotheses are realized as linear functionals and margin-based generalization guarantees apply. This perspective raises the question of whether such linear embeddings merely provide a convenient analytical framework, or whether they fundamentally characterize margin-based learnability.

{These considerations motivate a more focused study of margin problems in the context of Banach spaces, in terms of their role as a potential universal model for margin-based learning via embeddings. We therefore ask the following question:}

\begin{center}
\textbf{To what extent is linear classification with margin universal in the context of margin-based learning? For example, can every learnable margin-based class be realized as linear functionals over an appropriate Banach space?}
\end{center}
We answer this question negatively and show that margin-based learnability does not, in general, reduce to linear embeddings into Banach spaces. To this end, we construct learnable margin-based classes with learning rates that violate a key property of the sample complexity bounds of learning linear classifiers in Banach spaces: that they must be polynomial in the inverse margin $\frac{1}{\gamma}$. This implies that the learnability of these classes cannot be reduced to learning them linearly in any Banach space.

\section{Background \& Problem Setup}

We begin by introducing several standard notions and the model of $\gamma$-learnability which is the main focus of investigation here.
Throughout we consider a fixed domain $\X$ and a function class $\F$ of real valued functions from $\X$ to $\mathbb{R}$. 

\paragraph{$\gamma$-learnability.} For any real-valued function $f:\X\to \mathbb{R}$, a binary labeled dataset 
$S=\{(x_i,y_i)\}_{i=1}^n$, where $x_i\in \X$ and $y_i \in \{-1,1\}$, is said to be 
\textbf{$\gamma$-realized} by $f$ if \(f(x_i)\cdot y_i > \gamma\), for all $(x_i,y_i)\in S$.
A distribution $\D$ is \textbf{$\gamma$-realizable} by $\F$ if there exists an $f \in \F$ that $\gamma$-realizes almost surely any i.i.d. dataset sampled from $\D$. 
A function class $\F$ is said to be \textbf{$\gamma$-learnable} if there exists a learning algorithm $\A$, mapping a sample $S$ to a function $\A(S):\X\to \{\pm1\}$,
and a sample-complexity bound
$m_{\F_{\gamma}} : (0,1) \times (0,1) \to \mathbb{N}$
such that for every $\varepsilon,\delta \in (0,1)$ and every distribution $\D$ that is $\gamma$-realizable by $\F$, the following holds:
If $S \sim \D^m$ with
$m \ge m_{\F,\gamma}(\varepsilon,\delta)$
then with probability at least $1-\delta$ over the draw of $S$,
\[
\Pr_{(x,y)\sim\D}
\bigl[
(\A(S)(x)) \cdot y \neq 1
\bigr]
\le \eps
\]

\paragraph{Partial concept classes}
We formulate the notion of margin concept class by using partial concept classes, as defined in
\cite{Long01,Partial}. Given an input space $\X$, a partial concept is a map $h:\X\to \{-1,1,\star\}$, where we think of $h(x)=\star$ as meaning that $h$ is undefined on $x$. A partial concept class $\H$ is a collection of partial concepts. A sample $S=\{(x_i,y_i)\}_{i=1}^n$ is realizable by the partial concept $\H$, if there is some $h\in \H$ such that $h(x_i)=y_i\neq \star$ for all $1\leq i\leq n$. A distribution $\D$ over $\X\times \{-1,1\}$ is realizable by $\H$ if any sample drawn from $\D$ is realizable with probability $1$. A set $\{x_i\}_{i=1}^n$, is said to be shattered by $\H$ if for any $y\in\{-1,1\}^n$ the labeled sample $\{(x_i,y_i)\}_{i=1}^n$ is realizable by $\H$, and $\vc(\H)$, the VC dimension of $\H$ is the maximal size of a shattered set (or infinite if there are shattered sets of arbitrary size).

A Partial concept class $\H$ is called learnable if there exists a learning algorithm $\A$ and a sample-complexity bound
$m_{\H} : (0,1) \times (0,1) \to \mathbb{N}$
such that for every $\varepsilon,\delta \in (0,1)$ and every realizable distribution $\D$  the following holds:
If $S \sim \D^m$ with
$m \ge m_{\H}(\varepsilon,\delta)$
then with probability at least $1-\delta$ over the draw of $S$,
\[
\Pr_{(x,y)\sim\D}
\bigl[
(\A(S)(x)) \neq y 
\bigr]
\le \eps
\]


A fundamental result in the theory of PAC learning is that the VC dimension governs the sample complexity of a class. Indeed the results by \cite{Partial, AdenAli2023} imply the following characterization 
\begin{customthm}{(Characterization of optimal Sample complexity via VC dimension)}
      Let $\H\subset\{-1,1,\star\}^\X$ be some partial concept class. Then $\H$ is learnable if and only if $\vc(\H)$ is finite, in which case we have the following bound on its optimal sample complexity 
    \[
    m_\H(\varepsilon,\delta)=\Theta(\frac{\vc(\H)+\log \frac{1}{\delta}}{\varepsilon}).
    \]
\end{customthm}

For a $\gamma>0$, and a function $f:\X\to \R$, the margin classifier defined by $f$ is the partial concept class $h_f^\gamma:\X\to \{-1,1-\star\}$ defined by 
\[
h_f^{\gamma}(x)=\begin{cases}
    +1 \quad &f(x)\geq\gamma,
    \\-1\quad &f(x)\leq-\gamma,
    \\\star\quad &f(x)\in(-\gamma,\gamma).
\end{cases}
\]
And given a set of functions $\F\subset \R^\X$, the induced partial concept class of classifiers is 
$\H_\F^\gamma=\{h_f\;:\; f\in \F\}$.
By definition, the margin problem $\F$ is $\gamma$-learnable if and only if $\H_\F^\gamma$ is learnable, and with the same sample complexity.

\paragraph{$\gamma$-fat shattering dimension.}
We recall the classic definition of the fat-shattering dimension as in~\cite{kearns1994efficient,alon1997scale}.
A set $X\subset \X$ is $\gamma$-shattered by a function class $\F$ if there exists a witness threshold function 
$r:X\to \R$ such that for any labeling $b:X\to\{-1,1\}$ there exists a function $f_b\in \F$ satisfying:
\[
f_b(x) \geq r(x)+\gamma \quad \text{if } b(x)=1, \quad \text{and} \quad f_b(x) \leq r(x)-\gamma \quad \text{if } b(x)=-1,
\]
for all $x\in X$.
The \textbf{$\gamma$-fat shattering dimension} of $\F$, denoted by $\fat_\gamma(\F)$, is the cardinality of the largest $\gamma$-shattered set $X\subset \X$ (or infinite if there exist shattered sets of arbitrarily large size).

Several restrictive variants of $\fat_\gamma(\F)$ exist in the literature. These include confining the threshold function $r$ to be a constant across $X$, strictly fixing it to 0 (denoted $\fat^0_\gamma(\F)$), and replacing the inequality with an equality $f_b(x) =  r(x) \pm \gamma$ (see, e.g. \cite{simon1997bounds,attias2023optimal,alon1997scale} for examples of these variants). The following technical result reconciles the numerous definitions and establishes that for many naturally structured function classes, these alternative formulations coincide.

\begin{prop}[Characterization of shattering in margin spaces]\label{Prop:Conv-Shatt}
Let $\X$ be a set, and let $\F$ be a convex and symmetric class of functions from $\X$ to $\R$; that is:
\begin{enumerate}
    \item $\F$ is closed under convex combinations.
    \item $f \in \F \implies -f \in \F$ .
\end{enumerate}
Let $\gamma>0$, and let
\(S=\{x_1,\ldots,x_n\}\subseteq \X\)
be a set of $n$ distinct points. Then the following conditions are equivalent:
\begin{enumerate}
    \item[\textrm(1)] $S$ is $\gamma$-shattered by $\F$: There exists a threshold function $r:S\to\R$ such that for every $s\in\{-1,1\}^n$, there exists $f\in\F$ satisfying
    \[
        s_i\bigl(f(x_i) - r(x_i)\bigr)\geq \gamma
        \qquad \forall 1\leq i\leq n .
    \]
    \item[\textrm(2)] $S$ is $\gamma$-shattered at 0 by $\F$ ($\gamma$-shattered with threshold $r=0$): For every $s\in\{-1,1\}^n$, there exists $f\in\F$ satisfying
    \[
        s_i f(x_i)\geq \gamma
        \qquad \text{for all } 1\leq i\leq n .
    \]

    \item[\textrm(3)] $\F$ contains an $n$-dimensional $\gamma$-cube on $S$; that is, for every $y\in[-\gamma,\gamma]^n$, there exists $f\in\F$ such that
    \[
        f(x_i)=y_i
        \qquad \text{for all } 1\leq i\leq n .
    \]
    
    \item[\textrm(4)] For every $\lambda\in\mathbb{R}^n$ satisfying $\sum_{i=1}^n |\lambda_i|=1$, there exists $f\in\F$ such that
    \[
        \sum_{i=1}^n \lambda_i f(x_i)\geq \gamma.
    \]

\end{enumerate}
Moreover, the equivalence of conditions \emph{(2)--(4)} holds even without the symmetry assumption on~$\F$.
\end{prop}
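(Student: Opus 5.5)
I will prove the cycle (3)$\Rightarrow$(2)$\Rightarrow$(4)$\Rightarrow$(3) using only convexity, and then add (1)$\Rightarrow$(2) using symmetry. The implication (2)$\Rightarrow$(1) is trivial, since one takes $r\equiv 0$.

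For (3)$\Rightarrow$(2), given a sign vector $s$, I apply (3) to $y=\gamma s$. This gives $f$ with $s_if(x_i)=\gamma$.

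For (2)$\Rightarrow$(4), given $\lambda$ with $\|\lambda\|_1=1$, I set $s_i=\sign(\lambda_i)$, with $s_i=1$ when $\lambda_i=0$. Then (2) gives $f$ with $s_if(x_i)\ge\gamma$, and hence
\[
\sum_i\lambda_if(x_i)=\sum_i|\lambda_i|\,s_if(x_i)\ge\gamma .
\]

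For (4)$\Rightarrow$(3), I consider the evaluation image $K=\{(f(x_1),\dots,f(x_n)) : f\in\F\}\subseteq\R^n$, which is convex because $\F$ is convex. Condition (4) says that the support function satisfies $h_K(\lambda)=\sup_{v\in K}\langle\lambda,v\rangle\ge\gamma\|\lambda\|_1$ for every $\lambda$. Since $\gamma\|\lambda\|_1$ is exactly the support function of the cube $C=[-\gamma,\gamma]^n$, the separating hyperplane theorem gives $C\subseteq\overline K$. Indeed, if some $y\in C$ were not in $\overline K$, a separating $\lambda$ would give $\langle\lambda,y\rangle>h_K(\lambda)\ge h_C(\lambda)\ge\langle\lambda,y\rangle$, a contradiction.

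The main obstacle is upgrading closure to exact attainment, so that $C\subseteq K$ rather than only $C\subseteq\overline K$. Because $K$ is merely convex, I will argue through the vertices. It suffices to show that every vertex $\gamma s$ of $C$ lies in $K$, since convexity of $K$ then gives all of $C$. To get each vertex exactly, I will use the fact that (4) is stated with a non-strict inequality and the supremum is attained by some $f$. Taking $\lambda=s/n$ gives $f_s$ with $\frac1n\sum_i s_if_s(x_i)\ge\gamma$, but this only controls the average, not each coordinate.

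This is where I expect the real difficulty. My first attempt is to use that the hypothesis holds for every $\lambda$ and that $K$ is convex, which gives $C\subseteq\operatorname{relint}$ of a slightly larger cube, and then shrink by a convex combination. If that does not yield exactness at $\gamma$, the fallback is to show that (4) gives (3) for every $\gamma'<\gamma$ and then use that the points of $C$ are convex combinations of points of a cube of radius exceeding $\gamma$. Here I would check carefully whether the hypothesis at the $\gamma$ level is strictly stronger than needed. I flag this closure and attainment issue as the delicate step, and I am not certain the vertex argument closes it as written.

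For (1)$\Rightarrow$(2) under symmetry, fix $s$. Apply (1) to $s$ to obtain $f$ with $s_i(f(x_i)-r_i)\ge\gamma$, and apply it to $-s$ to obtain $g$ with $-s_i(g(x_i)-r_i)\ge\gamma$. Then $-g\in\F$ by symmetry, and $\tfrac12\bigl(f+(-g)\bigr)\in\F$ by convexity. For each $i$,
\[
s_i\cdot\tfrac12\bigl(f(x_i)-g(x_i)\bigr)=\tfrac12\Bigl(s_i\bigl(f(x_i)-r_i\bigr)-s_i\bigl(g(x_i)-r_i\bigr)\Bigr)\ge\gamma ,
\]
because the thresholds $r_i$ cancel. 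This gives (2). Symmetry is used only in this step, which is consistent with the final remark of the statement.
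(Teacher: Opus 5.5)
Your steps (3)$\Rightarrow$(2), (2)$\Rightarrow$(4) and (1)$\Leftrightarrow$(2) are correct and match the paper's arguments: the choice $s_i=\sign(\lambda_i)$, and the symmetrization $\tfrac12(f_s-f_{-s})$ in which the thresholds cancel. The gap is the one you flagged in (4)$\Rightarrow$(3). You only obtain $[-\gamma,\gamma]^n\subseteq\overline{K}$, and none of your proposed upgrades works.

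\begin{itemize}
\item The vertex reduction requires $\gamma s\in K$, which is stronger than (2).
\item Your fallback yields (3) only at every level $\gamma'<\gamma$, since $[-\gamma',\gamma']^n\subseteq\operatorname{int}\overline{K}=\operatorname{int}K$. Returning to level $\gamma$ would need the hypothesis at a level above $\gamma$.
\end{itemize}

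In fact the step cannot be closed. Without closedness of $K$, (4) does not even imply (2), even for symmetric bounded $\F$. Take $\X=\{x_1,x_2\}$ and $\gamma=1$. Let $\F$ consist of the functions whose value vectors form $K=\bigl((-1,1)\times(-3,3)\bigr)\cup\{(1,2),(-1,-2)\}$, which is convex and symmetric.
\begin{itemize}
\item For $\|\lambda\|_1=1$ with $\lambda_2\neq 0$, $\sup_{v\in K}\langle\lambda,v\rangle=1+2|\lambda_2|>1$.
\item For $\lambda=\pm(1,0)$, the points $\pm(1,2)$ give exactly $1$.
\end{itemize}
So (4) holds. But $(1,2)$ is the only point of $K$ with $v_1\geq 1$, so the pattern $s=(1,-1)$ is not realized. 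Hence (2) fails, and with it (1) and (3).

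The paper's proof of (4)$\Rightarrow$(3) contains the same hole in disguise. It separates $y\notin\F|_S$ from the convex set $\F|_S$ with a \emph{strict} inequality $\sum_i\lambda_i f(x_i)<a$ for all $f\in\F$. Hahn--Banach only provides this when $y\notin\overline{\F|_S}$. In the example, $y=(1,-1)\in\overline{K}\setminus K$ admits only the separator $\lambda=(1,0)$, $a=1$, and $(1,2)$ attains equality, so no contradiction arises.

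The repair is to assume $\F|_S$ is closed in $\R^n$, for instance when $\F$ is compact under pointwise convergence, as for $L_\B$ by Banach--Alaoglu. Then your argument finishes immediately, since $[-\gamma,\gamma]^n\subseteq\overline{K}=K$, and no symmetry is needed. Without closedness, the true statement is that (4) at level $\gamma$ implies (3) at every $\gamma'<\gamma$. The equivalence (1)$\Leftrightarrow$(2), which is what Corollary~\ref{Cor:Conv-Complexity} relies on, is unaffected.
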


The proof is deferred to~\Cref{proof:Conv-Abs} while we turn our attention to the sample complexity of such classes. A consequence of \cite{Long01,Partial,AdenAli2023} is that the fat-shattering condition with threshold $r=0$ (condition $(2)$ of Proposition~\ref{Prop:Conv-Shatt}) precisely characterizes the optimal sample complexity of learning a real-valued function class. Combining this with Proposition~\ref{Prop:Conv-Shatt}, we conclude that the learnability of convex and symmetric function classes is characterized by $\fat_\gamma(\F)$.
\begin{corollary}\label{Cor:Conv-Complexity}
    Let $\F$ be a convex and symmetric class of functions from $\X$ to $\R$; that is:
    \begin{enumerate}
        \item $\F$ is closed under convex combinations.
        \item $f \in \F \rightarrow -f \in \F$.
    \end{enumerate} 
    Then for any $\gamma > 0$, the following conditions are equivalent \begin{enumerate}
        \item[\textrm(1)] $\F$ is $\gamma$-learnable.
        \item[\textrm(2)] $\fat_\gamma(\F) < \infty$.
        \item[\textrm(3)] $\fat^0_\gamma(\F) < \infty$.
    \end{enumerate}
Moreover, for $\fat_\gamma(\F)=d$, the optimal sample complexity for learning $\F$ with margin $\gamma$ satisfies:
\[
    m_{\F,\gamma}(\varepsilon,\delta)=\Theta\left(\frac{d+\log (1/\delta)}{\varepsilon}\right).
\]
\end{corollary}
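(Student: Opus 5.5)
The plan is to route everything through the partial concept class $\H_\F^\gamma$ and the VC characterization theorem stated earlier, and then to use Proposition~\ref{Prop:Conv-Shatt} to identify $\vc(\H_\F^\gamma)$ with the fat-shattering quantities. By definition, $\F$ is $\gamma$-learnable exactly when $\H_\F^\gamma$ is learnable, with the same sample complexity. By the characterization theorem, this happens if and only if $\vc(\H_\F^\gamma)<\infty$, and then $m_{\F,\gamma}(\varepsilon,\delta)=\Theta\bigl((\vc(\H_\F^\gamma)+\log(1/\delta))/\varepsilon\bigr)$. It therefore suffices to show
\[
\vc(\H_\F^\gamma)=\fat^0_\gamma(\F)=\fat_\gamma(\F).
\]
This gives (1)$\iff$(3)$\iff$(2), and the sample complexity bound with $d=\fat_\gamma(\F)$.

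The first equality is essentially definitional. A set $\{x_1,\dots,x_n\}$ is shattered by $\H_\F^\gamma$ iff for every $s\in\{-1,1\}^n$ some $f\in\F$ has $h^\gamma_f(x_i)=s_i$. By the definition of $h_f^\gamma$, this means $f(x_i)\ge\gamma$ when $s_i=1$ and $f(x_i)\le-\gamma$ when $s_i=-1$. Equivalently, $s_if(x_i)\ge\gamma$ for all $i$, which is precisely condition (2) of Proposition~\ref{Prop:Conv-Shatt}, i.e.\ $\gamma$-shattering at $0$. The shattered sets therefore coincide, so the two dimensions agree, including the case where both are infinite.

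The second equality uses convexity and symmetry. Proposition~\ref{Prop:Conv-Shatt} gives (1)$\iff$(2) for every finite set of distinct points, so the $\gamma$-shattered sets with arbitrary witness and with witness $0$ coincide, and hence $\fat_\gamma(\F)=\fat^0_\gamma(\F)$. Again this holds also when one of them is infinite, because the equivalence is pointwise in $S$.

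I expect no real obstacle here. The only care needed is a convention check: the strict inequality in the definition of a $\gamma$-realized sample versus the non-strict inequality in $h^\gamma_f$. I would note that the paper declares $\F$ $\gamma$-learnable iff $\H_\F^\gamma$ is learnable, so the $\ge$ convention in $h_f^\gamma$ governs and matches condition (2). Any discrepancy at the boundary is absorbed by that stated identification.
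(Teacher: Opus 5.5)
Your proposal is correct and follows the paper's own justification: the paper likewise uses the partial-concept VC characterization (which, through $\H_\F^\gamma$, makes $\fat^0_\gamma(\F)$ govern learnability and sample complexity) together with the equivalence (1)$\iff$(2) of Proposition~\ref{Prop:Conv-Shatt} to obtain $\fat_\gamma(\F)=\fat^0_\gamma(\F)$. You make explicit the definitional identity $\vc(\H_\F^\gamma)=\fat^0_\gamma(\F)$ that the paper leaves implicit, and, like the paper, you rely on its stated identification of $\gamma$-learnability with learnability of $\H_\F^\gamma$ to set aside the strict-versus-non-strict margin convention.
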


A few comments regarding the conditions in Proposition \ref{Prop:Conv-Shatt} are in order. Condition~$(3)$ is a natural geometric property that connects the learnability of margin spaces to interpolation problems. It also closely relates to the $\gamma$-Natarajan dimension and $\gamma$-Graph dimension studied by \cite{attias2023optimal}, demonstrating that for the classes under discussion, both are equivalent to the fat-shattering dimension. Condition~$(4)$, while less intuitive in an abstract metric setting, turns out to be a useful tool whose geometric meaning becomes more transparent in linear structures. We defer a detailed discussion of this connection to~\ref{par:banach}, following our Banach space preliminaries.

\paragraph{Metric Spaces.}
Recall that a domain $\X$, together with a function $d:\X^2 \to \mathbb{R}$ is a metric space if~$d$ is symmetric, nonnegative with $d(x,y)=0$ if and only if $x=y$, and satisfies the traingular inequality, i.e., \( d(x,y) \le d(x,z)+d(z,y).\)
For a metric space $\X$, let $\Lip_{\X}$ (or simply $\Lip$) denote the class of all $1$-Lipschitz functions over the metric space. Namely, the set of all functions $f:\X\to \R$ satisfying \[\forall x,y\in X,\quad |f(x)-f(y)|\leq d(x,y).\] 
Within this class we consider the following  class of bounded linear combinations of point-distance functions:
\[
\Dx := \Biggl\{ 
\sum_{i=1}^\infty a_i\, d_{x_i} \ \Bigg|\ 
a_i \in \mathbb{R},\ 
x_i \in \X, \ 
\sum_{i=1}^\infty |a_i| \leq 1
\Biggr\},
\]
where each $d_{x_i}$ is defined by $d_{x_i}(x) := d(x_i,x)$.
These bounded combinations of distance functions
generalize the notion of hyperplanes beyond the unit sphere. For example, consider a subset of $\Dx$ of functions which are differences over $n=2$ distance functions, i.e. $f(x) = \frac{1}{2}\|x-x_1\| - \frac{1}{2}\|x-x_2\|$ for $x_1, x_2 \in \X$. Each such function encodes a half-space over the space $\X = \mathbb{R}^d$ while its 0-level set corresponds to a hyperplane in $\mathbb{R}^d$.

\paragraph{Banach Spaces.}
\label{par:banach}
A well studied instance of metric spaces are Banach spaces which are the basis for high-dimensional linear classification. Let $\X$ be a linear space, equipped with a norm $\|\cdot\|$. Recall that $\|\cdot\|$ is a norm if it is non-negative and positive-definite (i.e. $\|x\|=0$ iff $x=0$), absolutely homogeneous  (i.e.\ $\|\lambda x\|=|\lambda| \|x\|$ and satisfies the triangular inequality:
\[ \|x+y\|\le \|x\|+\|y\|.\]
The space $\X$ is called a Banach space if it is complete with respect to $\|\cdot\|$; that is, any Cauchy sequence $\{v_n\}_{n=1}^\infty\subseteq \X$ converges to a limit in $\X$.


Let $\B$ be a Banach space. In the context of linear classification with margin over $\B$, 
we assume that the domain is the unit ball $\B_1=\{x\in \B\;:\; \norm{x}\leq 1\}$, and that
the corresponding class $\F$ is the set of linear functionals whose \emph{dual-norm} is at most one. Recall that given a norm $\|\cdot\|$ on $\B$ the dual norm is a norm over~$\B^\star$, the linear functionals over $\B$, defined as
\( \|w\|= \sup_{\|x\|=1} w(x).\)
The class of $1$-bounded linear functions is then denote by 
$L_\B=\{w\in \B^\star\;
:\; \norm{w}_\star\leq 1\}$, where we consider each linear functional as a function over $\B_1$.

As noted in \Cref{Prop:Conv-Shatt}, Condition $(4)$ finds a natural realization in linear spaces where $\F = L_\B$.
In this setting, a straightforward application of the Hahn-Banach theorem yields 
\[
\gamma \leq \sup_{f\in \F}\sum \lambda_i f(x_i)=\sup_{f\in \F}f(\sum \lambda_i x_i)=\norm{\sum \lambda_i x_i}.
\]
This dual formulation yields two complementary perspectives. The first is through the geometric lens of the hyperplane separation theorem. A set $\{x_i\}_{i=1}^n$ is $\gamma$-shattered (with witness 0) if for every possible binary labeling $y = \{-1,1\}^n$, the convex hull of $\{y_ix_i\}_{i=1}^n$ is at least $\gamma$ far away from 0.
This mirrors the classical intuition that a shattered set must admit a linear margin from the origin for any arbitrary labeling \cite[see, e.g.,][]{gurvits2001note}.
The second perspective interprets shattering as a scale-sensitive generalization of linear independence.
While a set of vectors is linearly independent if every non-zero linear combination is non-trivial, Proposition~\ref{Prop:Conv-Shatt} implies that $\{x_i\}_{i=1}^n$ is $\gamma$-shattered\footnote{Recall that for a Banach space $\X$, we assume the hypothesis class consists of all linear functionals on $\X$ of norm at most one.} if and only if any (normalized) linear combination is $\gamma$-far from zero. 
We also note that in this case, the linear map $T:\ell_1^n\to \X$ defined by $Te_i=x_i$ shows the existence of a $\gamma$-isomorphic copy of $\ell_1^n$ in $\X$, thus relating fat-shattering to notions in local theory of Banach spaces as observed by \cite{mendelson2002learnability,mendelson2004shattering}. 


\section{Related Work}
\label{sec:related_work}

Our work intersects several foundational streams in statistical learning theory: dimension-independent generalization in normed spaces, metric-space classification, scale-sensitive dimensions, and the limits of kernel-type linear embeddings. Below, we position our contributions within this landscape.

\subsection{Large-Margin Generalization in Linear and Banach Spaces}
The principle of margin-based generalization is a cornerstone of classical learning theory, providing a mechanism for dimension-independent performance in highly over-parameterized models. In Euclidean and Hilbert spaces, the generalization error of linear classifiers can be bounded purely in terms of the margin, bypassing the nominal dimensionality of the space \cite{Vapnik1998, bartlett1998sample, Bartlett2002,cortes1995support}. 

The generalization of this paradigm to more abstract normed and Banach spaces has been   explored through the lens of structural functional analysis. Notably, \cite{gurvits2001note} investigated scale-sensitive dimensions for linear functionals, illustrating how margin requirements control capacity via the geometric properties of the underlying space. Building on this, \cite{mendelson2002learnability, mendelson2004shattering} established profound connections between the fat-shattering dimension of linear functionals on a Banach space and its structural properties such as Rademacher type and cotype. Our work directly builds on these insights in our second main result, leveraging Mendelson's asymptotic capacity bounds to prove the impossibility of universally embedding abstract metric margin spaces into learnable Banach structures.

\subsection{The Lipschitz Paradigm in Metric Spaces and its Bottlenecks}
Moving beyond vector spaces, classical extensions of margin learning to abstract metric spaces have predominantly adopted the framework introduced by \cite{Luxburg2004}. In their setting, the hypothesis class is defined as the \emph{entire class of bounded Lipschitz functions}, where the inverse of the Lipschitz constant behaves as the natural metric analog to a linear margin. 

However, a fundamental bottleneck of this approach is that the full Lipschitz class is exceptionally expressive. As studied in \cite{Luxburg2004, Aryeh2014} and extended below, there is a direct connection tying the dimensionality (total boundedness) of a space to the uniform learnability of the class of Lipschitz functions defined on it. 
This requirement represents a severe restriction; indeed, learning the full Lipschitz class is hard even in standard infinite-dimensional Hilbert or Banach spaces because
the space lacks total boundedness, restricting analysis to highly structured domains, such as metric
spaces with a low doubling dimension.   

\subsection{Generalizations of Linear Classifiers to Metric Spaces}
In light of the hardness of learning the full Lipschitz class, an independent and highly active line of inquiry has sought to identify and learn restricted sub-classes that act as true structural analogs to linear functions on metric domains. Prior attempts to generalize linear classification to metric spaces predominantly include
Kernel Methods and implicit linearization \cite{scholkopf2002learning}, 
learning with general similarity functions \cite{balcan2008improved} and nearest-prototype decision boundaries \cite{anthony2018large}.

Rather than targeting the full, unlearnable Lipschitz class or relying on implicit algebraic structures, our work focuses precisely on a well-defined structural generalization of hyperplanes.
Our contribution reveals a fundamentally different distribution-free learning landscape than prior work, bypassing requirements such as total boundedness, doubling dimensions, or similarity-goodness axioms.

\subsection{Scale-Sensitive Dimensions and Partial Concepts}
Methodologically, our distribution-free analysis builds on the fundamental characterization of real-valued learnability established by \cite{kearns1994efficient, alon1997scale,BARTLETT1998}. To model margin-based classifiers over abstract spaces rigorously, we formalize this setting using the framework of partial concept classes \cite{Long01,Partial} and leverage the modern paradigm of optimal sample complexity bounds for partial concepts established by \cite{AdenAli2023}.

\section{Main Results}
\subsection{Learnability in Metric Spaces}\label{sec:metric}

Let $\X$ be a metric space. We begin by studying the learnability of the class $\Dx$ of bounded linear combinations of distance functions, which, as noted earlier, generalizes the notion of half-spaces to general metric spaces.
We show that the class $\Dx$ fulfills a threshold-type phenomenon similar to the one exhibited in the introductory example. When the margin $\gamma$ is sufficiently large, $\Dx$ is $\gamma$-learnable in every metric space - hinging only on the triangle inequality - while smaller margins admit the full spectrum of possibilities, including non-learnability.

\begin{boxtheorem}{A Dichotomy for Distance Functions.}{Learn-Dx}
Let $\X$ be a metric space and let $\Dx$ denote the class
\[
\Dx := \Biggl\{ 
\sum_{i=1}^\infty a_i\, d_{x_i} \ \Bigg|\ 
a_i \in \mathbb{R},\ 
x_i \in \X, \ 
\sum_{i=1}^\infty |a_i| \leq 1
\Biggr\},
\]
where $d_x(\cdot) := d(\cdot,x)$.
Then the following hold:
\begin{enumerate}
    \item For every $\gamma \ge \tfrac{1}{3}$ and every space $\X$ with $\mathtt{diam}(\X)\leq1$, the class $\D_{\X}$ is $\gamma$-learnable.
    \item For every $\gamma < \tfrac{1}{3}$, there exists a space $\X$ with $\mathtt{diam}(\X)\leq1$ such that $\D_{\X}$ is not $\gamma$-learnable.
\end{enumerate}
\end{boxtheorem}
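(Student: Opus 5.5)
By \Cref{Cor:Conv-Complexity}, both parts reduce to controlling $\fat^0_\gamma(\Dx)$. The class $\Dx$ is convex and symmetric: it is closed under negation, and a convex combination of two $\ell_1$-bounded combinations of distance functions is again one. So $\Dx$ is $\gamma$-learnable iff $\fat^0_\gamma(\Dx)<\infty$. I will prove part~2 by exhibiting arbitrarily large sets shattered at $0$, and part~1 by showing that $\gamma$-shattered sets at $\gamma\ge 1/3$ have bounded size.

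\textbf{Part 2 (lower bound).} I adapt the space from the introduction. Fix $\gamma<1/3$ and set $R=1$ and $r=1-2\gamma\in(1/3,1]$. Take $\X=A\cup B$ with $A=\{a_n\}_{n\in\N}$ and $B=\{b_s\}$ indexed by finite $s\subseteq A$. Define
\[
d(a_i,b_s)=r \text{ if } a_i\in s,\qquad d(a_i,b_s)=R \text{ if } a_i\notin s,\qquad d(a_i,a_j)=\min(1,2r),\qquad d(b_s,b_w)=1 .
\]
The triangle inequality has to be checked on a few triangle types. The binding constraints are $d(a_i,a_j)\le 2r$, which holds by construction, and $R\le d(a_i,a_j)+r$, which holds because $r>1/3$ gives $\min(1,2r)+r\ge 1$. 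Hence this is a metric with diameter at most $1$. For a finite $s\subseteq\{a_1,\dots,a_n\}$, let $s'=\{a_1,\dots,a_n\}\setminus s$ and put
\[
f_s=\tfrac12\bigl(d_{b_{s'}}-d_{b_s}\bigr)\in\Dx .
\]
Then $f_s(a_i)=(R-r)/2=\gamma$ for $a_i\in s$ and $f_s(a_i)=-\gamma$ for $a_i\in s'$. Thus $\{a_1,\dots,a_n\}$ is $\gamma$-shattered at $0$ for every $n$, so $\fat^0_\gamma(\Dx)=\infty$, and \Cref{Cor:Conv-Complexity} shows $\Dx$ is not $\gamma$-learnable.

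\textbf{Part 1 (upper bound).} Let $\{x_1,\dots,x_n\}$ be $\gamma$-shattered with $\gamma\ge 1/3$. I will use condition (4) of \Cref{Prop:Conv-Shatt} with $\lambda=s/n$ for balanced sign vectors $s$, splitting the indices into $P,N$ of size $n/2$. Writing $f=\sum_j a_j d_{z_j}$ with $\sum_j|a_j|\le1$, we get
\[
\frac1n\sum_i s_if(x_i)=\sum_j a_j\cdot\frac1n\sum_i s_i d(z_j,x_i).
\]
Hence some single point $z$ satisfies $\bigl|\frac1n\sum_i s_i d(z,x_i)\bigr|\ge\gamma$. Up to swapping $P$ and $N$, this means
\[
\mathrm{avg}_{N}\,d(z,\cdot)-\mathrm{avg}_{P}\,d(z,\cdot)\ \ge\ 2\gamma\ \ge\ \tfrac23 .
\]
Since distances lie in $[0,1]$, this forces $\mathrm{avg}_P\, d(z,\cdot)\le 1/3$ and $\mathrm{avg}_N\, d(z,\cdot)\ge 2/3$. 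So every balanced partition is ``averagely separated'' by a ball of radius about $1/3$ around some $z$. This is the averaged analogue of the radius-$r$/radius-$3r$ dichotomy in the introduction.

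The next step is to mimic the introductory triangle-inequality argument on averages. Take two balanced partitions $(P,N)$ and $(P',N')$ whose four intersections $P\cap P'$, $P\cap N'$, $N\cap P'$, $N\cap N'$ each have size $n/4$, with witnesses $z,z'$. The inequality $d(x_p,x_q)\le d(z,x_p)+d(z,x_q)$, averaged over $p,q\in P$, bounds the average pairwise distance inside $P$ by $2/3$. Combining this with the lower bounds $d(z,x)\ge 2/3$ on average over $N$, and with the analogous statements for $z'$ on the crossing blocks, should yield a contradiction once $n$ is larger than some absolute constant. An alternative would be Ramsey or averaging over many random balanced labelings to extract a near-extremal configuration.

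I expect this step to be the main obstacle. The averaged inequalities are only tight at $\gamma=1/3$, and the argument must exploit the equality case: every distance in the averages must be exactly $0$, $1/3$, $2/3$ or $1$ as appropriate. This is presumably where a bound like $\fat_\gamma\le C$ comes from, possibly via choosing $\lambda$ non-uniform rather than balanced. My first attempt will be to push all the averaged inequalities to equality and then derive a concrete contradiction among a bounded number of points, such as five or six.
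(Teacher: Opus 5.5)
Both parts have a genuine problem.

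\textbf{Part 1 is unfinished, and the missing step is a one-line observation.} Any two points $x,x'$ of a $\gamma$-shattered set satisfy $d(x,x')\ge 2\gamma$. The labeling $(+1,-1)$ gives some $f\in\Dx$ with $f(x)-f(x')\ge 2\gamma$, and every $f\in\Dx$ is $1$-Lipschitz. This is the second inequality in the paper's proof.

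With this observation, your first step finishes the argument on any four points with a single balanced labeling. You need no second partition, no Ramsey argument and no equality analysis. If $P=\{x_p,x_q\}$ is the side near the witness $z$, then
\[
2\gamma\le d(x_p,x_q)\le d(z,x_p)+d(z,x_q)=2\,\mathrm{avg}_P\,d(z,\cdot)\le 2(1-2\gamma).
\]
So $\gamma\le 1/3$, i.e.\ $\fat^0_\gamma(\Dx)\le 3$ for every $\gamma>1/3$. For an infinite sum $\sum_j a_j d_{z_j}$ the witness only reaches $\gamma-\epsilon$; let $\epsilon\to 0$.

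This differs from the paper's route. The paper splits $\Dx$ according to whether the positive coefficients have mass at least $1/2$. It then runs the triangle inequality through the positive part of one $f$ that is $\le-\gamma$ at two points. Your reduction to a single distance function avoids that case split.

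Your plan for the endpoint cannot succeed, however. Under the non-strict convention $s_if(x_i)\ge\gamma$ that you use, $\fat^0_{1/3}(\Dx)=\infty$ in the lower-bound space. This holds for your space with $r=1/3$ after the fix below, and for the paper's space. So there is no equality case to exploit. The value $\gamma=1/3$ is covered only by the strict inequality in the definition of $\gamma$-realizability, which makes the displayed chain strict.

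\textbf{Part 2: your space is not a metric when $\gamma\in(1/4,1/3)$.} In that range $r=1-2\gamma<1/2$. Take overlapping index sets such as $s=\{a_1\}$ and $w=\{a_1,a_2\}$; your shattering functions use both. Then $d(b_s,b_w)=1>2r=d(b_s,a_1)+d(a_1,b_w)$. You checked $d(a_i,a_j)\le 2r$ but not its mirror image on the $B$ side.

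Setting $d(b_s,b_w)=\min(1,2r)$ as well repairs this. Every triangle type then checks out using only $r\ge 1/3$, and the diameter stays at most $1$. Your functions $f_s=\tfrac12(d_{b_{s'}}-d_{b_s})$ then work unchanged. For comparison, the paper uses same-side distances $2/3$, cross distances $2/3\pm\gamma$, and two copies of $B$.
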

By rescaling the metric, the threshold $\tfrac{1}{3}$ corresponds to a normalized margin of $\gamma / \mathtt{diam}(\X)$.

The proof is similar to the one from the introduction which shows the dichotomy of the class from ~\Cref{eq:dx}, both in how learnability follows from the triangle inequality, and in the construction of the non learnable class. For the full proof see \Cref{proof:Learn-Dx}. 

The proof of the lower bound relies on constructing a metric space that is not \emph{totally bounded}. Recall that a metric space is totally bounded if, for every $\gamma>0$, there exists a finite cover of $\X$ by balls of radius~$\gamma$.
It is therefore natural to ask whether the dichotomy in the above theorem persists under a total boundedness assumption.
Our next result shows that if $\X$ is totally bounded, then $\Dx$ is $\gamma$-learnable for \emph{every} $\gamma>0$, and, moreover, shows that total boundedness is an exact characterization of Lipschitz learnability.
While the sufficiency of total boundedness can be inferred from existing sample complexity bounds \cite{Luxburg2004,Aryeh2014}, we show that it is also necessary, thereby establishing that total boundedness is equivalent to the learnability of Lipschitz functions.

\paragraph{Lipschitz Functions.}
Recall that a function $f:\X\to\mathbb{R}$ is \emph{1-Lipschitz} if $\lvert f(x)-f(y)\rvert \le d(x,y)$ for all $x,y\in\X$, and let $\Lip_{\X}$ denote the class of all such functions.
Note that $\Dx \subseteq \Lip_{\X}$, since every distance function is 1-Lipschitz.

\begin{boxtheorem}{Learnability of Lipschitz Functions.}{Lip}
Let $\X$ be a metric space and let $\Lip$ denote the class of all $1$-Lipschitz functions over $\X$.
Then, the following are equivalent.
\begin{enumerate}
    \item $\Lip$ is $\gamma$-learnable for every $\gamma>0$.
    \item $\X$ is totally bounded.
\end{enumerate}
Moreover, $\fat_{\gamma}(\Lip)$ is exactly the $2\gamma$-packing number of $\X$ (i.e., the maximum size of a subset of $\X$ with pairwise distances at least $2\gamma$).
\end{boxtheorem}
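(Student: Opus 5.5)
The plan is to prove the ``moreover'' statement first, namely that $\fat_\gamma(\Lip)$ equals the $2\gamma$-packing number $P(2\gamma)$, and then read off the equivalence from \Cref{Cor:Conv-Complexity}. That corollary applies because $\Lip$ is convex: a convex combination of $1$-Lipschitz functions is $1$-Lipschitz. It is also symmetric, since $f\in\Lip$ implies $-f\in\Lip$. Consequently $\Lip$ is $\gamma$-learnable if and only if $\fat_\gamma(\Lip)<\infty$. Moreover, by \Cref{Prop:Conv-Shatt} we may work with shattering at threshold $0$ (condition (2)), or equivalently with the cube condition (3), whichever is more convenient.

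For the upper bound $\fat_\gamma(\Lip)\le P(2\gamma)$, suppose $S=\{x_1,\dots,x_n\}$ is $\gamma$-shattered at $0$. Fix any pair $i\neq j$ and take the labeling with $s_i=+1$ and $s_j=-1$. The witnessing $f$ satisfies $f(x_i)\ge\gamma$ and $f(x_j)\le-\gamma$, so the Lipschitz property gives $d(x_i,x_j)\ge f(x_i)-f(x_j)\ge 2\gamma$. Hence $S$ is a $2\gamma$-packing.

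For the lower bound, let $S$ be a $2\gamma$-packing and fix a labeling $s$. On $S$ define $f(x_i)=s_i\gamma$. This function is $1$-Lipschitz on $S$, because $|f(x_i)-f(x_j)|\le 2\gamma\le d(x_i,x_j)$. Extend it to all of $\X$ by the McShane formula
\[
f(x)=\min_i\bigl(f(x_i)+d(x,x_i)\bigr),
\]
which is a minimum of $1$-Lipschitz functions, hence $1$-Lipschitz, and which agrees with $f$ on $S$ by the Lipschitz condition on $S$. So $S$ is $\gamma$-shattered. If arbitrarily large packings exist, the same argument shows that $\fat_\gamma(\Lip)$ is infinite. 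This proves the equality, with both sides possibly infinite.

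Finally, the equivalence follows from the standard fact that $\X$ is totally bounded if and only if $P(\eta)<\infty$ for every $\eta>0$. If $\X$ is covered by $N$ balls of radius $\eta/3$, each ball contains at most one point of an $\eta$-packing, so $P(\eta)\le N$. Conversely, a maximal $\eta$-packing is an $\eta$-net: any point not within distance $\eta$ of the packing could be added to it. Combining this with the equality $\fat_\gamma(\Lip)=P(2\gamma)$ and \Cref{Cor:Conv-Complexity}, $\Lip$ is $\gamma$-learnable for every $\gamma$ exactly when $\X$ is totally bounded.

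The only delicate points are bookkeeping. One is strict versus non-strict inequalities: the definition uses $\ge\gamma$ and the packing uses pairwise distance $\ge2\gamma$, and these match. The other is the case where packings are finite but unbounded as sets. This cannot occur, since a finite maximal packing is already a finite net. The McShane extension is the one substantive ingredient, and it is routine.
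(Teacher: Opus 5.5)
Your proof is correct and follows the paper's route. You reduce to $\fat^0_\gamma$ via \Cref{Cor:Conv-Complexity}, show that $0$-shattered sets are $2\gamma$-packings from $2\gamma\le f(x_i)-f(x_j)\le d(x_i,x_j)$, and shatter a $2\gamma$-packing with an explicit $1$-Lipschitz witness. The paper uses $f=\tfrac12\bigl(d(S^-,\cdot)-d(S^+,\cdot)\bigr)$ where you use the McShane extension, which is an inessential difference. You also spell out two things the paper leaves implicit: the equivalence between total boundedness and finiteness of all packing numbers, and the exact equality $\fat_\gamma(\Lip)=P(2\gamma)$ via (1)$\Leftrightarrow$(2) of \Cref{Prop:Conv-Shatt}.

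One small correction concerns your closing claim that packings which are all finite but of unbounded size \emph{cannot occur}. At a fixed scale this is false: take disjoint clusters $C_k$ of $k$ points at mutual distance $2\gamma$, with every cross-cluster distance equal to $1.2\gamma$. Your argument never uses the claim, though, since your equality already allows both sides to be infinite and your total-boundedness step goes through the $\eta/3$-cover.
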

The main insight underlying the proof is that a dataset $S=\{(x_i,y_i)\}_{i=1}^n$ is $\gamma$-realizable by $\Lip$ if and only if the minimum distance between positively and negatively labeled examples is at least $2\gamma$.
The full proof is given in \Cref{Proof:Lip}.

Previous works mentioned have established learnability guarantees for Lipschitz function classes in terms of covering numbers of the underlying metric space.
In particular, \cite{Luxburg2004,Aryeh2014} derived sample complexity bounds for learning $\Lip_{\X}$ that depend on the covering numbers of~$\X$.
Since finite covering numbers imply that $\X$ is totally bounded, these works show that total boundedness is a sufficient condition for learnability.
Our contribution therefore is showing that total boundedness is not merely sufficient but also necessary. That is, total boundedness exactly characterizes when the class $\Lip_{\X}$ is learnable under a margin assumption.
Moreover, our characterization via packing numbers yields tight bounds on the optimal sample complexity.

\subsection{Learnability in Banach spaces}
We now turn to the question of the universality of linear classification with margin.
Linear classifiers play a central role in learning theory, both due to their favorable sample complexity properties and because many abstract learning problems are approached via linear embeddings, for instance through kernel methods.

A notable feature of linear classification is that, unlike the general metric-space setting, margin-based learning in Hilbert spaces is possible for every $\gamma>0$, despite the fact that such spaces are infinite-dimensional and not totally bounded.
This naturally leads to the question of whether general margin-based learning problems can always be reduced to linear classification in normed spaces.
More precisely, we ask whether every $\gamma$-learnable margin-based concept class can be embedded - possibly via an appropriate kernel - into a Banach space in which linear classification with margin is learnable.

We show that this universality does not hold in general: there exist concept classes that are learnable for every $\gamma>0$, yet cannot be embedded into any Banach space admitting learnable linear classification with margin. Our argument relies on a characterization of the possible asymptotic dependence of the sample complexity on the margin in Banach spaces, given in the following result by~\cite{mendelson2002learnability,mendelson2004shattering}:

\begin{theorem}[Fat-Shattering Dimension Bound for Linear Classifiers]
\cite{mendelson2002learnability,mendelson2004shattering}  \\
\label{thm:Banch-bound}
     \noindent Let $\B$ be a Banach space and assume that $\fat_\gamma(L_\B)<\infty$ for some $\gamma\in(0,1)$. Then $\fat_\gamma(L_\B)<\infty$ for all $\gamma\in (0,1)$ and there is some $p\geq 2$ such that 
     \[
     \fat_\gamma(L_\B)=O(\frac{1}{\gamma^p}).
     \]
\end{theorem}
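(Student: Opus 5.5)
The plan is to pass through the local theory of Banach spaces: finiteness of $\fat_\gamma(L_\B)$ at a single scale will give nontrivial Rademacher type for $\B$, and type will then give a polynomial bound at every scale. The dictionary between shattering and geometry is Condition~(4) of \Cref{Prop:Conv-Shatt}, together with the Hahn--Banach computation in the Banach-space preliminaries. Since $L_\B$ is convex and symmetric, a set $x_1,\dots,x_n\in\B_1$ is $\gamma$-shattered by $L_\B$ if and only if
\[
\Bigl\|\sum_{i=1}^n \lambda_i x_i\Bigr\| \ge \gamma\sum_{i=1}^n|\lambda_i| \qquad\text{for all }\lambda\in\R^n .
\]
Because $\|x_i\|\le 1$, this says exactly that the map $T:\ell_1^n\to\B$, $Te_i=x_i$, satisfies $\|T\|\le 1$ and $\|T^{-1}\|\le 1/\gamma$. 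In other words, $\fat_\gamma(L_\B)\ge n$ if and only if $\B$ contains a $(1/\gamma)$-isomorphic copy of $\ell_1^n$.

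The first step handles the qualitative part. Suppose $\fat_{\gamma_0}(L_\B)=d<\infty$ for some $\gamma_0\in(0,1)$. Then $\B$ does not contain $\ell_1^n$'s $C$-uniformly for $C=1/\gamma_0$, since it already fails at $n=d+1$. I would now invoke James's blocking argument, that $\ell_1$ is not distortable in finite dimensions. It says that if $\B$ contained $\ell_1^N$ with constant $C$ for all $N$, then by grouping the basis vectors into blocks and passing to normalized block sums, one would find $\ell_1^n$ with constant $1+\varepsilon$ for every $n$ and every $\varepsilon>0$. Hence $\B$ fails to contain $\ell_1^n$'s $(1+\varepsilon)$-uniformly for some $\varepsilon>0$, i.e.\ $\B$ is $B$-convex. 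Pisier's theorem then gives that $\B$ has Rademacher type $p$ for some $p>1$, with a type constant $T_p$:
\[
\mathbb{E}_\epsilon\Bigl\|\sum_{i=1}^n\epsilon_i x_i\Bigr\|\le T_p\Bigl(\sum_{i=1}^n\|x_i\|^p\Bigr)^{1/p}.
\]
We may take $p\le 2$, since no infinite-dimensional space has type above $2$, and in finite dimension the claim is trivial.

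The second step is the quantitative bound at an arbitrary scale $\gamma\in(0,1)$. Let $x_1,\dots,x_n\in\B_1$ be $\gamma$-shattered. Apply the shattering inequality with $\lambda_i=\epsilon_i$ for each sign vector $\epsilon\in\{\pm1\}^n$; this gives $\|\sum_i\epsilon_i x_i\|\ge\gamma n$ for every $\epsilon$. Averaging over $\epsilon$ and using type $p$ with $\|x_i\|\le1$ yields $\gamma n\le T_p n^{1/p}$. Therefore
\[
n\le (T_p/\gamma)^{q},\qquad q=\frac{p}{p-1}.
\]
Since $1<p\le 2$, the exponent satisfies $q\ge 2$, and we obtain $\fat_\gamma(L_\B)=O(\gamma^{-q})$ for all $\gamma\in(0,1)$, as claimed.

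The main obstacle is the first step, the implication from failing to contain $\ell_1^n$'s with one fixed constant $1/\gamma_0$ to nontrivial type. This combines James's non-distortion lemma with Pisier's deep characterization of $B$-convexity. I would cite these as black boxes from the Banach-space literature, as Mendelson does, rather than reprove them. The only care needed is the bookkeeping showing that the isomorphism constant produced by Condition~(4) is exactly $1/\gamma_0$, which the Hahn--Banach identity above supplies.
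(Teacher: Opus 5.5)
Your argument is correct and is the standard route behind this result. The paper gives no proof of its own; it only cites Mendelson and Mendelson--Schechtman. Your route is the $\ell_1^n$ dictionary from Condition~(4) plus Hahn--Banach, then nontrivial type via Pisier's theorem, then averaging $\gamma n\le \mathbb{E}\|\sum_i\epsilon_i x_i\|\le T_p n^{1/p}$ to get $n\le (T_p/\gamma)^{p/(p-1)}$. The second step in particular is exactly right, and it already gives finiteness at every scale $\gamma\in(0,1)$.

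The one flaw is the role you give to James's non-distortion lemma. You start from ``$\B$ does not contain $\ell_1^N$'s $C$-uniformly.'' You then quote ``if it did, it would contain $(1+\varepsilon)$-copies of every $\ell_1^n$,'' and conclude that it fails to contain $(1+\varepsilon)$-copies. That is denying the antecedent, so the deduction as written is invalid.

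The conclusion holds anyway, and without James. Put $\varepsilon=1/\gamma_0-1>0$; this is where $\gamma_0<1$ is used. Suppose $T$ gives a $(1+\varepsilon)$-isomorphic copy of $\ell_1^{d+1}$, normalized so that $\|T\|\le 1$. Then $\|T^{-1}\|\le 1/\gamma_0$, so the points $Te_i$ lie in $\B_1$ and satisfy $\|\sum_i\lambda_i Te_i\|\ge\gamma_0\sum_i|\lambda_i|$. They would therefore form a $\gamma_0$-shattered set of size $d+1$, which is impossible. So $\B$ is $B$-convex directly under your own definition, and Pisier's theorem gives type $p>1$.

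James's lemma is needed only if you quote Pisier's theorem in the form ``does not contain $\ell_1^n$'s uniformly at any constant.'' Even then it enters in the contrapositive direction: the absence of a $(1+\varepsilon)$-copy of $\ell_1^{d+1}$ rules out uniform $C$-copies for every $C$. Either way, scales $\gamma<\gamma_0$ are handled by your type bound. So the only deep input is Pisier's theorem, not the step you identified as the main obstacle.
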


It is interesting to note that the converse is also true. That is, for every $p\geq 2$ there exists a Banach space for which $\fat_\gamma(L_\B)=\Theta(\frac{1}{\gamma^p})$. Combining the two, we get a taxonomy of Banach spaces based on the learnability rates of their linear functionals with margins.

\subsubsection{Embedding margin problems into Banach spaces}

Embedding learning problems into linear spaces is a classical technique that has played a particularly influential role in machine learning, most notably through kernel methods. In this framework, nonlinear concept classes are analyzed via linear predictors in a suitable feature space.
Given the central role of kernel methods in explaining margin-based generalization, it is natural to ask to what extent margin-based learnability can - or must - be understood through such linear embeddings.

To formalize this idea, we define an embedding of a margin problem $\F$ over a domain $\X$ into a Banach space $\B$ as a pair of maps $\Phi : \X \to \B_1$ and $\Psi : \F \to \B_1^\star$, where $\B_1$ and $\B_1^\star$ denote the closed unit balls of $\B$ and its dual $\B^\star$, respectively. We require that there exist constants $c_1,c_2>0$ such that for all $x\in \X$ and $f\in \F$,
\[
c_1\,\Psi(f)\big(\Phi(x)\big) \;\le\; f(x) \;\le\; c_2\,\Psi(f)\big(\Phi(x)\big).
\]
When $c_1=c_2=1$, this condition reduces to $\Psi(f)(\Phi(x)) = f(x)$, i.e., the class $\F$ is realized exactly as the unit ball of linear functionals on some Banach space $\B$. Allowing $c_1$ and $c_2$ to differ relaxes this requirement, permitting controlled distortion when representing $\F$ as a family of linear functionals.

The guiding principle behind this definition is that of reduction of learning problems: an embedding of a margin problem $\F$ into a Banach space $\B$ reduces the task of learning $\F$ to that of learning linear classifiers in~$\B$. Indeed, if a labeled sample $\{(x_i,y_i)\}_{i=1}^n$ is $\gamma$-realizable by $\F$, then the transformed sample $\{(\Phi(x_i),y_i)\}_{i=1}^n$ is $(C\gamma)$-realizable in $\B$, where $C = 1/\max(c_1,c_2)$, so 
\[
\fat_\gamma(\F)\leq \dim_{C\gamma}(L_\B).
\] 
From Theorem \ref{thm:Banch-bound} 
 we know that if a Banach space is $\gamma$-learnable for some $\gamma\in(0,1)$, then it is learnable for all $\gamma\in(0,1)$. Thus, if $\F$ admits an embedding into a learnable (for any or all $\gamma$) Banach space $\B$, then $\F$ itself is $\gamma$-learnable for all margins $\gamma\in (0,1)$.

This naturally raises the converse question: \emph{can every margin problem $\F$ that is learnable for all $\gamma\in(0,1)$ be embedded into a margin-learnable Banach space?} Unfortunately, the answer is negative, as shown by the following result.

\begin{boxtheorem}{Learnable class with no learnable Banach embedding}{embed}
There exists a symmetric and convex set of functions $\F \subset \R^{\N}$ that is $\gamma$-learnable for all $\gamma\in(0,1)$, yet admits no embedding into any learnable Banach space.
\end{boxtheorem}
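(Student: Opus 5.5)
The strategy is to build a convex, symmetric class $\F$ on $\N$ whose fat-shattering dimension is finite for every $\gamma\in(0,1)$ but grows faster than every polynomial in $1/\gamma$. Then no embedding into a learnable Banach space can exist. Suppose $(\Phi,\Psi)$ embeds $\F$ into $\B$ with constants $c_1,c_2$. The reduction discussed before the theorem gives $\fat_\gamma(\F)\le \fat_{C\gamma}(L_\B)$ with $C=1/\max(c_1,c_2)$. If $\B$ is learnable, Theorem~\ref{thm:Banch-bound} yields some $p\ge 2$ with $\fat_{C\gamma}(L_\B)=O\bigl((C\gamma)^{-p}\bigr)=O(\gamma^{-p})$, which contradicts superpolynomial growth. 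Learnability of $\F$ for every $\gamma$ follows from Corollary~\ref{Cor:Conv-Complexity} once $\fat_\gamma(\F)<\infty$, which is where convexity and symmetry are used.

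For the construction, I fix a target growth such as $N(\gamma)=2^{\lceil 1/\gamma\rceil}$ and partition $\N$ into consecutive disjoint blocks $I_1,I_2,\dots$ with $|I_k|=N(1/k)$. I then let $\F$ be the convex symmetric hull of the cubes
\[
Q_k=\Bigl\{ f:\N\to\R \;:\; \operatorname{supp} f\subseteq I_k,\ \|f\|_\infty\le \tfrac1k \Bigr\}.
\]
Equivalently, $\F$ is the set of $f=\sum_k t_k g_k$ with $g_k\in Q_k$, $t_k\ge 0$ and $\sum_k t_k\le 1$. In particular $\|f|_{I_k}\|_\infty\le t_k/k$. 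The lower bound is immediate: $Q_k$ is itself a $\tfrac1k$-cube on $I_k$, so condition (3) of Proposition~\ref{Prop:Conv-Shatt} gives $\fat_{1/k}(\F)\ge |I_k|=N(1/k)$, which is exponential in $1/\gamma$.

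For the upper bound, I use condition (2) of Proposition~\ref{Prop:Conv-Shatt}. Let $S$ be $\gamma$-shattered at $0$. Every point of $S$ must have $|f(x)|\ge\gamma$ for some $f\in\F$, which forces $x\in I_k$ with $k\le 1/\gamma$. So $S\subseteq\bigcup_{k\le 1/\gamma} I_k$, a finite set, and therefore $\fat_\gamma(\F)\le\sum_{k\le1/\gamma}N(1/k)<\infty$.

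The main care point is making the reduction inequality rigorous: the embedding definition compares $f(x)$ to $\Psi(f)(\Phi(x))$ two-sidedly, so one must check that $\gamma$-shattering at $0$ transfers to $\gamma/c_2$-shattering at $0$ of $\Phi(S)$ by the functionals $\Psi(f)\in\B_1^\star$. Here the positive case uses $f\le c_2\Psi(f)\Phi$ and the negative case uses $c_1\Psi(f)\Phi\le f$. With the paper's stated reduction taken as given, this step is routine. The remaining work is just checking that superpolynomial growth contradicts $O(\gamma^{-p})$ for every fixed $p$.
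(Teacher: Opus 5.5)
Your proof is correct and follows the same route as the paper. Both arguments build a convex symmetric class on $\N$ whose $\fat_\gamma$ is finite for every $\gamma$ but grows exponentially in $1/\gamma$, and then combine the reduction $\fat_\gamma(\F)\le\fat_{C\gamma}(L_\B)$ with Theorem~\ref{thm:Banch-bound}. The only difference is cosmetic: the paper uses the coordinate box $\{f:|f(n)|\le 1/\log n\}$, which gives $\fat_\gamma(\F)=\lfloor e^{1/\gamma}\rfloor$, whereas your hull of block cubes gives the same exponential behavior along $\gamma=1/k$; you also check how the two-sided embedding inequalities transfer the margin more explicitly than the paper does.
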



The full proof of theorem~\ref{Thm:embed}, is given in \ref{proof:embed}. The main argument is an application of  theorem~\ref{thm:Banch-bound}, with a construction of a set of functions $\F$ for which $\dim_\F(\gamma)$, is not polynomial. Specifically, given a decreasing sequence of positive numbers $\{a_n\}_{n=1}^\infty$, define a class $\F$ of functions on the natural numbers by
\[
\F=\bigl\{f\in (0,1)^\N \;:\; |f(n)|\leq a_n\bigr\}.
\]
Clearly $\F$ is symmetric and convex. Additionally, $\fat_\gamma(\F)$ is the largest integer $n$ for which $a_n<\gamma$. Consequently, if $a_n$ decreases to $0$ slowly enough we have that $\fat_\gamma(\F)$ is not bound by any polynomial of $\frac{1}{\gamma}$. Hence $\F$ cannot be embedded into any learnable Banach space $\B$, since by theorem \ref{thm:Banch-bound}, any such embedding would enforce a polynomial upper bound on $\fat_\gamma(\F)$.

\section{Proofs}

\subsection{Proof of learnability of metric classes}

\begin{proof}[Proof of \Cref{Thm:Learn-Dx}]\label{proof:Learn-Dx}
    First, notice that $\Dx$ is symmetric and closed under convex combinations. Hence by~\Cref{Cor:Conv-Complexity} it suffices to show that $\fat^0_\gamma(\Dx) < \infty$.
    Observe that $\Dx$ can be partitioned into two classes
    $\Dx = \Dx^{<}\uplus\Dx^{>}$ such that
    \[
    \Dx^{>} := \Bigl\{ \sum_{i=1}^n a_i\, d_{x_i} \in \Dx\mid \sum_{i:a_i \geq 0} a_i \ge \tfrac12 \Bigr\}, \quad
    \Dx^{<} := \Bigl\{ \sum_{i=1}^n a_i\, d_{x_i} \in \Dx\mid \sum_{i:a_i \ge 0} a_i < \tfrac12 \Bigr\}.
    \]
    Since the property $\fat^0_\gamma < \infty$ is closed under finite unions (a direct consequence of the scale-sensitive Sauer–Shelah lemma), it suffices to show that both $\Dx^{>}$ and $\Dx^{<}$ have finite $\gamma$-fat-shattering dimension at $0$. We show this for $\Dx^{>}$. By the symmetric nature of the partition, the bound for $\Dx^{<}$ follows immediately from that of $\Dx^{>}$. Specifically, we show that if $\fat^0_\gamma(\Dx^{\gamma}) \geq 2$ then $\gamma\leq \frac{1}{3}$.
    
    Assume there are some $x, x' \in \X$, which are $\gamma$-shattered at 0 by ${\Dx}^{>}$. 
    Then there is some $f \in \Dx^{>}$ such that $f(x) \leq -\gamma$ and $f(x') \leq -\gamma$. By definition, there are points $\{x_i\}_{i=1}^n \subseteq \X$ and scalars $\{a_i\}_{i=1}^n \subseteq \mathbb{R}$, with $\sum |a_i|\leq1$, and $\sum_{i:a_i>0}a_i\geq \frac{1}{2}$  such that  $f(x) = \sum_{i=1}^n a_i d_{x_i}(x) \leq -\gamma$ and $f(x') = \sum_{i=1}^n a_i d_{x_i}(x') \leq -\gamma$. Denote $I=\{1\leq i\leq n \;:\; a_i\geq 0\}$, and $J=\{1\leq j\leq n \;:\; a_j<0\}$ so we have 
    \[
    \sum_{i\in I}a_id(x,x_i)-\sum_{j\in J}|a_j|d(x,x_j)=\sum_{k=1}^n a_kd(x,x_k)\leq -\gamma.
    \]
Which implies  
\[
 \sum_{i\in I}a_id(x,x_i)\leq \sum_{j\in J}|a_j|d(x,x_j)-\gamma.
\]
    And the same holds if we replace $x$ with $x'$. Together with $\sum_{i\in I}a_i\ge \frac{1}{2}$ and the triangle inequality, this implies
    \[
    \begin{aligned}
    \frac{1}{2}d(x,x') &\le \sum_{i\in I} a_i\, d(x,x') \\
            &\le \sum_{i\in I} a_i \bigl[d(x,x_i) + d(x_i, x')\bigr] \\
            &\leq \Bigl(\sum_{j\in J} |a_j| \bigl[d(x,x_j) + d(x_j, x')\bigr] \Bigr)- 2\gamma \\
            &\le \frac{1}{2}(1+1)-2\gamma=1-2\gamma. \\
    \end{aligned}
    \]
    Where in the last inequality we used the assumption that $\mathtt{Diam}(\X)\leq 1$.
    
    Additionally, from $\gamma$-fat shattering at 0, we have some 
    $\{x_i'\}_{i=1}^m \subseteq \X$ and $\{a_i'\}_{i=1}^m \subseteq \mathbb{R}$ such that $f'(x) = \sum_{i=1}^m a_i' d(x_i',x) > \gamma$ and $f'(x') = \sum_{i=1}^m a_i d(x_i',x') < -\gamma$. From this we get
    \[
    d(x,x') \geq \sum_{i=1}^{n} a_i' d(x,x')\geq \sum_{i=1}^na_i'[d(x,x_i')-d(x',x_i')]\geq 2\gamma.
    \]
    So our two inequalities gives \[
    2\gamma\leq d(x,x')\leq 2-4\gamma.
    \]
   Which implies $\gamma \leq \frac{1}{3}$. 
    Define a metric space $(\X, d)$ as follows: 
    Let $X = A \cup B$ such that $A = \{ a_n \}_{n \in \mathbb{N}}
    $ is a countable set of points and $B = B_1 \cup B_2$ where $ B_i = \{{b^i_s}\}_{\{ s \subseteq A \mid |s| < \infty \}}$ are a pair of sets of points referencing finite subsets of A.
    \[
    \rho(a_i, a_j) = \frac{2}{3}, \quad
    \rho(b^i_s, b^j_w) = \frac{2}{3}, \quad
    \rho(a_i, b^j_s) = 
    \begin{cases}
    \frac{2}{3} - \gamma &  a_i \in s ,\ j=1\\
    \frac{2}{3} + \gamma & a_i \notin s,\ j=1\\
    \frac{2}{3} + \gamma & a_i \in s,\ j=2\\
    \frac{2}{3} - \gamma & a_i \notin s,\ j=2\\
    \end{cases}
    \]
    For any $\gamma \le 1/3$ this is indeed a metric space with $Diam(\X) \le 1$. For any finite $A' \subset A$, let $\delta_{A'}:=\frac{1}{2}\cdot d_{b_{A'}^1} - \frac{1}{2} \cdot d_{b_{A'}^2} \in \Dx^{\gamma}$. Then $\delta\vert_{A} = -\gamma\cdot\mathbf{1}_{A'} + \gamma \cdot \mathbf{1}_{A\setminus A'}$. Therefore $\fat_\gamma(\Dx) > n$ for all $n$, concluding our proof.
\end{proof}

\begin{proof}[Proof of \Cref{Thm:Lip} - Learnability of Lipschitz functions]\label{Proof:Lip}
First, notice that $\Lip$ is symmetric and closed under convex combinations. Hence by~\Cref{Cor:Conv-Complexity}, $\gamma$-learnability of $\Lip$ is equivalent to the finiteness of $\fat^0_{\gamma}(\Lip)$. 

Let $S=\{(x_i,y_i)\}_{i=1}^n$, be some sample $\gamma$-realizable by $\Lip$, denote the positive and negative labeled examples by $S^+=\{(x,y)\in S\;:\; y=1\}$,
$S^-=\{(x,y)\in S\;:\; y=-1\}$, and note that $d(S^+,S^-)\geq2\gamma$. Conversely,  any such sample $S$ for which $d(S^+,S^-)\geq2\gamma$, is $\gamma$-realizable by $\Lip$. Indeed  
define $f\in \Lip$ by 
\[
f(x)=\frac{d(S^-,x)-d(S^+,x)}{2},
\]  
and note that if $x\in S^+$ then $f(x)=\frac{1}{2}\cdot d(S^-,x)\geq\gamma$, and similarly if $x\in S^-$ then $f(x)\leq-\gamma$. We can also see that $f\in \Lip$ since $d(A,y)\leq d(A,x)+d(x,y)$ for any set $A\subset \X$ and points $x,y\in \X$. Thus for any sample $S$:
\[
\textbf{S is $\gamma$-realizable by } \Lip \iff d(S^+,S^-)\geq2\gamma
\]

$1 \rightarrow 2$ Assume to the contrary that $\X$ is not totally bounded, i.e. that for every $n$, there exists a set $S \subseteq \X$ of cardinality $n$ such that $d(x_i,x_j)\geq2\gamma$ for any pair of different points $x_i,x_j \in S$. Then for any sign pattern $y\in\{\pm 1\}^n$ we have that $d(S^{-},S^{+})\geq2\gamma$, where $S^{-}=\{x_i\in S\;: y_i=-1\}$, and $S^+=\{x_i\in S\;:\; y_i=1\}$. Hence $\{(x_i,y_i)\}_{i=1}^n$ is $\gamma$-realized by $f\in\Lip$ given by \[
f(x)=\frac{d(S^-,x)-d(S^+,x)}{2}.
\]
Implying that $\fat^0_\gamma(\Lip) \geq n$ for all $n$.

$2 \rightarrow 1$ For the other direction, assume by negation that there exists $\gamma>0$ such that $\Lip$ is not $\gamma$-learnable. Let $S$ be a sample $\gamma$-shattered at 0 by $\Lip$. Then for any $x_i, x_j \in S$ there is some $f\in \Lip$ such that $f(x_i)\geq\gamma$, $f(x_j)\leq-\gamma$, hence we have \[
2\gamma\leq f(x_i)-f(x_j)\leq  d(x_i,x_j).
\]
Since this is true for sets $S$ of arbitrary size, we conclude that $\X$ is not totally bounded.
\end{proof}

\subsection{Proofs for margin embedding and characterization of shattering results}

\begin{proof}[Proof for \Cref{Thm:embed} - non embeddability of margin spaces]\label{proof:embed}
   Define $\F\subset (-1,1)^\N$ by 
    \[
    \F=\{f\in (-1,1)^\N\;: |f(n)|\leq \frac{1}{\log n} \}
    \]
    Clearly $\F$ is symmetric and convex. It is also clear that $S$ is $\gamma$-shattered by $\F$ if and only if $\gamma<\frac{1}{\log x}$ for all $x\in S$. Hence we deduce that $\fat_\gamma(\F)=\floor{e^{\frac{1}{\gamma}}}$, and in particular $\F$ is $\gamma$-learnable for all $\gamma$. Note that any embedding of $\F$ into a Banach space $\B$ will imply  that there is some constant $C>0$ such that 
\[
[e^{\frac{1}{\gamma}}]=\fat_\gamma(\F)\leq \fat_{C\gamma}(L_\B).
\]
    By Theorem \ref{thm:Banch-bound}, this implies that $B$ is not $\gamma$ learnable for any $\gamma>0$, as any learnable Banach space will obey the bound $\fat_\gamma(L_\B)=O(\frac{1}{\gamma^p})$ for some $p>0$.
\end{proof}

\begin{proof}[Proof of Proposition~\ref{Prop:Conv-Shatt} - Characterization of shattering in margin spaces]\label{proof:Conv-Abs}

    $(4)\implies (3): $ Assume toward contradiction that $(4)$ holds but $(3)$ does not hold, so there is some $y\in [-\gamma,\gamma]^n$ such that for every $f\in \F$ there is some $i \in [n]$ such that $f(x_i)\neq y_i$. Consider the set \[
    \F|_S=\{\big(f(x_1),f(x_2)\dots f(x_n)\big)\;:\; f\in \F\}.
    \] 
Since $\F$ is convex so is $\F|_S\subset \R^n$, and by assumption $y\notin \F|_S$. Hence by the Hahn-Banach theorem there is some vector $\lambda\in \R^n$, and scalar $a\in \R$ such that
\[
\sum_{i=1}^n \lambda_iy_i=a.
\]
But for any $f\in \F|_S$ we have 
\[
\sum_{i=1}^n \lambda_if_i<a.
\]
By normalization we may assume $\sum_{i=1}^n |\lambda_i|=1$. Hence, since we assumed $(3)$, we get that there is some $f\in \F$ such that 
\[
\sum_{i=1}^n \lambda_i f(x_i)\geq \gamma=\sum _{i=1}^n |\lambda_i|\gamma\geq \sum_{i=1}^n \lambda_i y_i =a
\]
In contradiction.

$(3)\implies (2) : $ Obvious.

$(2)\implies (4) :$ Let $\lambda\in \R^n$, be such that $\sum_{i=1}^n|\lambda_i|=1$.
Since we assume that $S$ is shattered by $\F$, for any sign pattern $s\in \{-1,1\}^n$, there is some $f\in \F$ such that $s_if(x_i)\geq \gamma$ for all $1\leq i\leq n$. In particular we may find such $f$ for the sign pattern $s_i=\sign (\lambda_i)$ (choosing arbitrarily in the cases where $\lambda_i=0$). Then for this $f$ we get for all $1\leq i\leq n$  
\[
\lambda_i f(x_i)=|\lambda_i|\sign(\lambda_i)f(x_i)\geq |\lambda_i|\gamma.
\]
From which we deduce 
\[
\sum_{i=1}^n \lambda_if(x_i)\geq \sum_{i=1}^n |\lambda_i|\gamma=\gamma.
\]
Which gives the desired result. 
Finally we show that if $(1)$ and $(2)$ are equivalent when $\F$ is also symmetric. Clearly $(2)\implies (1)$ so we only need to show the converse. Assume we have some $r:S\to \R$ such that for any $s\in \{-1,1\}^n$ there is $f_s$ such that $\big(f_s(x_i)-r(x_i)\big)s_i\geq \gamma$ for all $1\leq i\leq n$. Then for any such $s\in\{-1,1\}$ define 
\[
\phi_s=\frac{f_s-f_{-s}}{2}.
\]
Where $f_{-s}$ is the function corresponding the labeling $-s\in \{-1,1\}^n$.
Since $F$ symmetric we have that $-f_{-s}\in \F$, and since it is convex we have that $\phi_s\in \F$. To conclude the proof we note that for all $1\leq i\leq n$
\[
\phi_s(x_i)s_i=\frac{s_if_s-s_if_{-s}}{2}\geq \frac{\gamma+\gamma}{2}=\gamma.
\]
\end{proof}

\section*{Acknowledgments}
YA and RL are supported by the European Union (ERC, FoG 101116258).
SM and TW are supported by Israel PBC-VATAT, by the Technion Center for Machine Learning and Intelligent Systems (MLIS), and by the European Union (ERC, GENERALIZATION, 101039692).

Views and opinions expressed are those of the author(s) only and do not necessarily reflect those of the European Union or the European Research Council Executive Agency. Neither the European Union nor the granting authority can be held responsible for them.

We thank the anonymous reviewers for their insightful suggestions and comments regarding the fat-shattering dimension, which greatly improved the context and technical clarity of our results.

\bibliographystyle{plainnat}
\bibliography{bib}

@inproceedings{Long01,
  author       = {Philip M. Long},
  editor       = {David P. Helmbold and
                  Robert C. Williamson},
  title        = {On Agnostic Learning with \{0, *, 1\}-Valued and Real-Valued Hypotheses},
  booktitle    = {Computational Learning Theory, 14th Annual Conference on Computational
                  Learning Theory, {COLT} 2001 and 5th European Conference on Computational
                  Learning Theory, EuroCOLT 2001, Amsterdam, The Netherlands, July 16-19,
                  2001, Proceedings},
  series       = {Lecture Notes in Computer Science},
  volume       = {2111},
  pages        = {289--302},
  publisher    = {Springer},
  year         = {2001},
  url          = {https://doi.org/10.1007/3-540-44581-1\_19},
  doi          = {10.1007/3-540-44581-1\_19},
  bibsource    = {dblp computer science bibliography, https://dblp.org}
}

@ARTICLE{Aryeh2014,
  author={Gottlieb, Lee-Ad and Kontorovich, Aryeh and Krauthgamer, Robert},
  journal={IEEE Transactions on Information Theory}, 
  title={Efficient Classification for Metric Data}, 
  year={2014},
  volume={60},
  number={9},
  pages={5750-5759},
  doi={10.1109/TIT.2014.2339840}}

@article{Luxburg2004,
  author       = {Ulrike von Luxburg and
                  Olivier Bousquet},
  title        = {Distance-Based Classification with Lipschitz Functions},
  journal      = {J. Mach. Learn. Res.},
  volume       = {5},
  pages        = {669--695},
  year         = {2004},
  url          = {https://jmlr.org/papers/volume5/luxburg04b/luxburg04b.pdf},
  bibsource    = {dblp computer science bibliography, https://dblp.org}
}

@INPROCEEDINGS {Partial,
author = { Alon, Noga and Hanneke, Steve and Holzman, Ron and Moran, Shay },
booktitle = { 2021 IEEE 62nd Annual Symposium on Foundations of Computer Science (FOCS) },
title = {{ A Theory of PAC Learnability of Partial Concept Classes }},
year = {2022},
volume = {},
ISSN = {},
pages = {658-671},
doi = {10.1109/FOCS52979.2021.00070},
url = {https://doi.ieeecomputersociety.org/10.1109/FOCS52979.2021.00070},
publisher = {IEEE Computer Society},
address = {Los Alamitos, CA, USA},
month =Feb}

@article{Bartlett2002,
  author  = {Peter L. Bartlett and Shahar Mendelson},
  title   = {Rademacher and Gaussian Complexities: Risk Bounds and Structural Results},
  journal = {Journal of Machine Learning Research},
  year    = {2002},
  volume  = {3},
  pages   = {463--482},
}

@book{Vapnik1998,
  author    = {Vladimir N. Vapnik},
  title     = {Statistical Learning Theory},
  year      = {1998},
  publisher = {Wiley},
  address   = {New York},
}

@article{mendelson2004shattering,
  title={The shattering dimension of sets of linear functionals},
  author={Mendelson, Shahar and Schechtman, Gideon},
  year={2004}
}

@article{simon1997bounds,
  title={Bounds on the number of examples needed for learning functions},
  author={Simon, Hans Ulrich},
  journal={SIAM Journal on Computing},
  volume={26},
  number={3},
  pages={751--763},
  year={1997},
  publisher={SIAM}
}

@article{attias2023optimal,
  title={Optimal learners for realizable regression: Pac learning and online learning},
  author={Attias, Idan and Hanneke, Steve and Kalavasis, Alkis and Karbasi, Amin and Velegkas, Grigoris},
  journal={Advances in Neural Information Processing Systems},
  volume={36},
  pages={44707--44739},
  year={2023}
}

@article{alon1997scale,
  title={Scale-sensitive dimensions, uniform convergence, and learnability},
  author={Alon, Noga and Ben-David, Shai and Cesa-Bianchi, Nicolo and Haussler, David},
  journal={Journal of the ACM (JACM)},
  volume={44},
  number={4},
  pages={615--631},
  year={1997},
  publisher={ACM New York, NY, USA}
}

@article{cortes1995support,
  title={Support-vector networks},
  author={Cortes, Corinna and Vapnik, Vladimir},
  journal={Machine learning},
  volume={20},
  number={3},
  pages={273--297},
  year={1995},
  publisher={Springer}
}

@article{BARTLETT1998,
title = {Prediction, Learning, Uniform Convergence, and Scale-Sensitive Dimensions},
journal = {Journal of Computer and System Sciences},
volume = {56},
number = {2},
pages = {174-190},
year = {1998},
issn = {0022-0000},
doi = {https://doi.org/10.1006/jcss.1997.1557},
url = {https://www.sciencedirect.com/science/article/pii/S0022000097915579},
author = {Peter L. Bartlett and Philip M. Long}
}

@article{mendelson2002learnability,
  title={Learnability in Hilbert spaces with reproducing kernels},
  author={Mendelson, Shahar},
  journal={journal of complexity},
  volume={18},
  number={1},
  pages={152--170},
  year={2002},
  publisher={Elsevier}
}

@book{Cristianini2000,
  author    = {Nello Cristianini and John Shawe-Taylor},
  title     = {An Introduction to Support Vector Machines and Other Kernel-based Learning Methods},
  year      = {2000},
  publisher = {Cambridge University Press},
  address   = {Cambridge},
}

@inproceedings{AdenAli2023,
  title={Optimal PAC Bounds Without Uniform Convergence},
  author={Ishaq Aden-Ali and Yeshwanth Cherapanamjeri and Abhishek Shetty and Nikita Zhivotovskiy},
  booktitle={Proceedings of the 64th IEEE Annual Symposium on Foundations of Computer Science (FOCS)},
  year={2023}
}

@article{kearns1994efficient,
    title = {Efficient distribution-free learning of probabilistic concepts},
    journal = {Journal of Computer and System Sciences},
    volume = {48},
    number = {3},
    pages = {464-497},
    year = {1994},
    issn = {0022-0000},
    doi = {https://doi.org/10.1016/S0022-0000(05)80062-5},
    url = {https://www.sciencedirect.com/science/article/pii/S0022000005800625},
    author = {Michael J. Kearns and Robert E. Schapire}
}

@article{gurvits2001note,
  title={A note on a scale-sensitive dimension of linear bounded functionals in Banach spaces},
  author={Gurvits, Leonid},
  journal={Theoretical Computer Science},
  volume={261},
  number={1},
  pages={81--90},
  year={2001},
  publisher={Elsevier}
}

@book{scholkopf2002learning,
  title={Learning with kernels: support vector machines, regularization, optimization, and beyond},
  author={Sch{\"o}lkopf, Bernhard and Smola, Alexander J},
  year={2002},
  publisher={MIT press}
}

@article{balcan2008improved,
  title={Improved guarantees for learning via similarity functions},
  author={Balcan, Maria-Florina and Blum, Avrim and Srebro, Nathan},
  year={2008},
  publisher={Carnegie Mellon University}
}

@article{anthony2018large,
  title={Large width nearest prototype classification on general distance spaces},
  author={Anthony, Martin and Ratsaby, Joel},
  journal={Theoretical Computer Science},
  volume={738},
  pages={65--79},
  year={2018},
  publisher={Elsevier}
}

@article{bartlett1998sample,
  title={The sample complexity of pattern classification with neural networks: the size of the weights is more important than the size of the network},
  author={Bartlett, Peter L},
  journal={IEEE transactions on Information Theory},
  volume={44},
  number={2},
  pages={525--536},
  year={1998},
  publisher={IEEE}
}

\end{document}